\newtheorem{theorem}{Theorem}
\newtheorem{corollary}{Corollary}
\newtheorem{proposition}{Proposition}
\newtheorem{definition}{Definition}
\newtheorem{claim}{Claim}
\newtheorem{lemma}{Lemma}
\newtheorem*{remark}{Remark}
\newcommand{\cross}{\mathrm{CE}}
\newcommand{\kl}{\mathrm{KL}}
\newcommand{\R}{\mathbb{R}}
\newcommand{\cX}{\mathcal{X}}
\newcommand{\cY}{\mathcal{Y}}
\newcommand{\cC}{\mathcal{C}}
\newcommand{\expect}{\mathbb{E}}
\newcommand{\cO}{\mathcal{O}}
\newcommand{\cF}{\mathcal{F}}
\newcommand{\cP}{\mathcal{P}}
\newcommand{\bE}{\mathbb{E}}
\newcommand{\argmin}{\text{argmin}}
\newcommand{\eps}{\varepsilon}
\newcommand{\tv}{\mathrm{D}_{\mathrm{TV}}}
\newcommand{\dist}{L}
\newcommand{\disthat}{\hat{L}}
\renewcommand{\tableofcontents}{%
    \section*{\contentsname}%
    \begingroup
    \setlength{\baselineskip}{20pt} 
    \@starttoc{toc}%
    \endgroup
}
\title{Revisiting Weak-to-Strong Generalization in Theory and Practice: \\ Reverse KL vs. Forward KL}
\author{
\textbf{Wei Yao\textsuperscript{1{$\star$}}},
\textbf{Wenkai Yang\textsuperscript{1{$\star$}}},
\textbf{Ziqiao Wang\textsuperscript{2}},
\textbf{Yankai Lin\textsuperscript{1}},
\textbf{Yong Liu\textsuperscript{1}$^{\dag}$}
\\
\\
 \textsuperscript{1}Gaoling School of Artificial Intelligence, Renmin University of China,
\\
 \textsuperscript{2}School of Computer Science and Technology, Tongji University,
\\
\tt\footnotesize\{wei.yao, wenkaiyang, yankailin, liuyonggsai\}@ruc.edu.cn~~ziqiaowang@tongji.edu.cn\\
}
\begin{document}
\maketitle

\let\thefootnote\relax\footnotetext{$^\star$ Equal contribution\hspace{3pt} \hspace{5pt}$^{\dag}$ Corresponding author\hspace{5pt}}

\begin{abstract}
As large language models advance toward superhuman performance, ensuring their alignment with human values and abilities grows increasingly complex. Weak-to-strong generalization offers a promising approach by leveraging predictions from weaker models to guide stronger systems, but its effectiveness could be constrained by the inherent noise and inaccuracies in these weak predictions. To address this, we propose a theoretically grounded approach that replaces forward KL divergence—whose mass-covering behavior risks overfitting to imperfect weak signals—with reverse KL divergence. Reverse KL divergence’s zero-forcing effect prioritizes high-confidence predictions, effectively mitigating the influence of unreliable weak supervision. Theoretically, we extend existing bounds and derive tighter lower bounds for both forward and reverse KL divergence. Notably, when a sufficiently pre-trained strong model is fine-tuned on the last linear layer, reverse KL guarantees that it outperforms its weak supervisor by the magnitude of their disagreement. Empirically, we demonstrate that reverse KL and reverse cross-entropy not only enable strong models to outperform those trained with forward KL and standard cross-entropy across most settings, but also exhibit greater robustness to noisy labels.
\end{abstract}

\section{Introduction}

Human supervision is indispensable to align Large Language Models (LLMs) with human values~\citep{bai2022training,achiam2023gpt}.
However, as LLMs approach superhuman capabilities, their behaviors may exceed human ability to reliably manage~\citep{openai_superalignment}. 
To address this challenge, 
Weak-to-Strong Generalization (W2SG)~\citep{burns2023weak} emerges as a promising approach, leveraging weaker models to guide and control more advanced systems, thereby bridging the gap between human oversight and superhuman AI capabilities.

In particular, W2SG demonstrates that strong pre-trained LLMs, when fine-tuned under weak model supervision, can achieve performance surpassing that of their weak supervisors. 
However, this approach is fundamentally constrained by the inherent imperfections of weak model supervision, which may introduce inaccuracies and noise~\citep{burns2023weak}. 
Blindly fitting the strong model to these imperfect signals can lead to a significant discrepancy between the ground truth and the model's predictions, ultimately undermining the effectiveness of W2SG~\citep{yao2025understanding}. 
This raises a critical question: \textit{How to effectively leverage weak supervision to guide strong models while mitigating the impact of noisy or inaccurate signals?}

\begin{figure*}[t]
\begin{center}
\subfigure[Knowledge Distillation]{ 
\begin{minipage}[t]{0.48\linewidth}  \centerline{\includegraphics[width=1\linewidth]{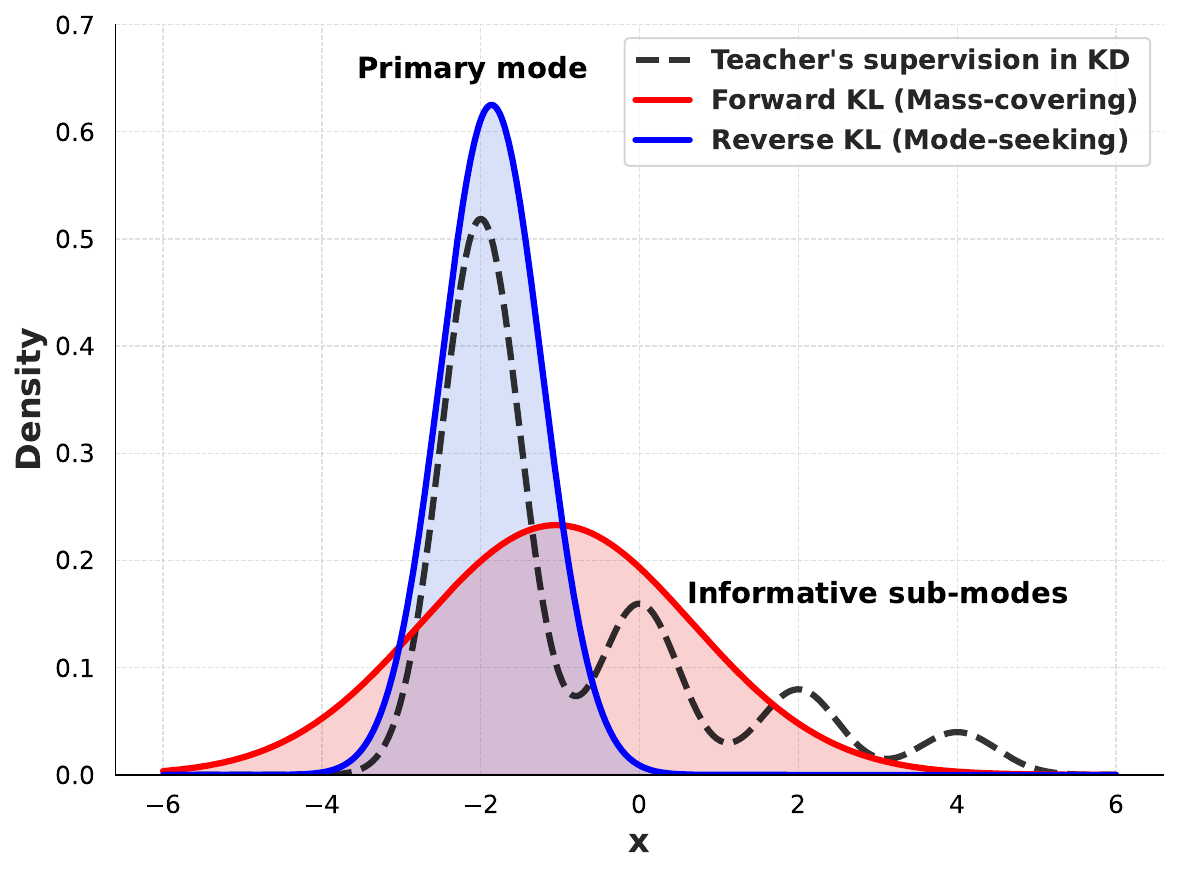}}
\end{minipage}  
}  
\subfigure[Weak-to-Strong Generalization]{
\begin{minipage}[t]{0.48\linewidth}
\centerline{\includegraphics[width=1\linewidth]{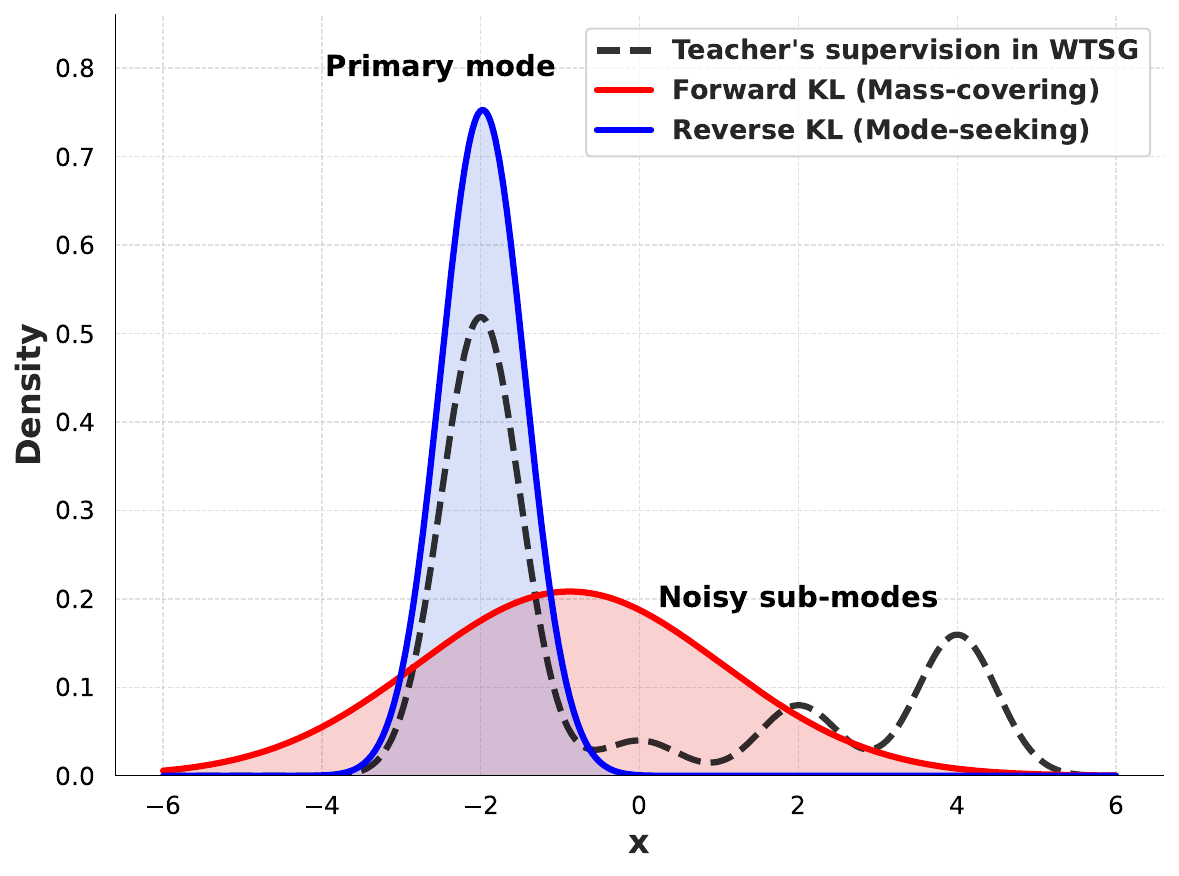}}
\end{minipage} 
}
\vspace{-5pt}
\caption{Illustration of the mass-covering behavior of forward KL divergence and the mode-seeking behavior of reverse KL divergence, highlighting their roles in KD and W2SG. A Gaussian mixture distribution, representing the teacher's supervision in KD and W2SG, is approximated by fitting a single Gaussian distribution using both forward and reverse KL divergence as loss functions.}
\label{fig:comparison_kl}
\end{center}
\vspace{-15pt}
\end{figure*}

To answer this question, we propose a theoretically principled approach, supported by fine-grained analysis and a simple yet effective solution.
Our motivation stems from an insightful comparison with Knowledge Distillation (KD)~\citep{hinton2015distilling} in classification, where strong teachers provide informative soft labels to guide weak students. 
In KD, the forward KL divergence loss plays a crucial role as it encourages students to learn not only the target class probabilities but also the relative relationships among non-target classes encoded in the teacher's soft labels.
For instance, in the image classification scenario, a strong teacher might assign higher probabilities to ``tiger'' than to ``dog'' when the input image is a ``cat'', reflecting the semantic similarity between cats and tigers in the feature space.
However, this advantageous property of forward KL in KD becomes a limitation in the W2SG paradigm.
The fundamental distinction lies in the quality of supervision: while strong teachers in KD provide reliable and informative soft labels, weak teachers in W2SG often generate noisy and potentially misleading signals for non-target classes~\citep{burns2023weak}.
Thus, the mass-covering nature of forward KL~\citep{jerfel2021variational,sun2024inverse}, which forces the student to match the entire probability distribution of the teacher's predictions, becomes detrimental in W2SG as it may lead the strong model to overfit to the weak teacher's unreliable supervision. This observation motivates our investigation of reverse KL divergence as a more suitable alternative for W2SG.
As shown in~\cref{fig:comparison_kl}, the key advantage of reverse KL lies in its mode-seeking behavior~\citep{minka2005divergence,ji2023language}, which enables the strong model to focus on the weak teacher's high-confidence predictions while being less sensitive to potentially noisy low-probability regions.
This property aligns better with the W2SG setting, as it allows the strong model to extract reliable patterns from weak supervision without being overly constrained by its imperfections. 

Building on the intuitive motivation above, we first conduct a theoretical analysis to compare forward losses and reverse losses in the context of W2SG. 
Inspired by the lower and upper bounds established for the strong model in W2SG~\citep{yao2025understanding}, we extend these results and derive better lower bounds for both forward and reverse losses.
Furthermore, we identify an advantage of reverse KL: when an adequately pre-trained strong model undergoes last linear layer fine-tuning, reverse KL guarantees that the strong student will outperform its weak teacher by at least the magnitude of their disagreement.
In our experiments, we empirically demonstrate that employing reverse KL divergence and reverse Cross-Entropy (CE) as loss functions enables the strong model to achieve superior performance compared to using forward KL divergence and standard CE.
We further show that these reverse losses are more robust to label noise.
Finally, we extend the analysis to an improved algorithm discussed in~\citet{burns2023weak}, where the optimization objective incorporates an additional regularization term. It further demonstrates the practical advantages of reverse CE over standard CE in the context of W2SG.

\section{Related Work}

\paragraph{Weak-to-Strong Generalization.}
The weak-to-strong paradigm~\citep{burns2023weak} emerges as a promising framework to address the challenges of AI alignment, particularly in the context of superalignment~\citep{openai_superalignment}—where future AI systems may surpass human capabilities, rendering human supervision weak or insufficient. 
It leverages weaker models to guide stronger models, potentially unlocking their full capabilities while maintaining alignment with human values.
It has been extensively studied through algorithms~\citep{zhu2024weak,agrawal2024ensemw2s,sang2024improving,guo2024improving,lyu2024macpo,cui2024bayesian,ye2025iterative}, empirical analyses~\citep{yang2024super,ye2024weak,zhou2025weak}, and theoretical frameworks~\citep{lang2024theoretical,somerstep2024statistical,wu2024provable,charikar2024quantifying,yao2025understanding,dong2025discrepancies,medvedev2025weak,shin2024weak,xue2025representations,ildiz2025highdimensional,somerstep2025transfer}, these works primarily focus on W2SG with forward KL divergence and CE losses. 
However, the potential of reverse KL and reverse CE losses in classification under the W2SG framework has not been sufficiently explored.
This gap motivates us to systematically investigated the benefits of these reverse losses in W2SG with both theoretical insights and empirical observations.
\footnotetext{Recently, we discovered that concurrent, independent efforts~\citep{mulgund2025relating} have also theoretically explored reverse losses in classification within the W2SG framework via information geometry and convex analysis. Due to space limitations, a more detailed comparison between our approach and theirs in~\cref{discussion:concurrent}.}

\paragraph{Forward KL and Reverse KL.}
Forward KL and Reverse KL are employed in distinct applications, each offering unique advantages.
\textit{Forward KL} is widely utilized in standard classification tasks~\citep{goodfellow2016deep}, often appearing in the form of CE loss to align predicted and true label distributions.
Its mass-covering behavior~\citep{jerfel2021variational,sun2024inverse} ensures that the model comprehensively captures all high-probability regions of the target distribution, making it particularly effective in knowledge distillation~\citep{hinton2015distilling} for classification tasks.
In such tasks, the teacher model's soft labels provide informative guidance, enabling the student model to learn a representative distribution~\citep{yang-etal-2025-distilling}.
In contrast, \textit{reverse KL} is frequently adopted in variational inference~\citep{kingma2014auto,pinheiro2021variational}, where it exhibits zero-forcing behavior~\citep{minka2005divergence}.
By focusing on high-confidence predictions while disregarding low-probability regions, reverse KL prioritizes precision over diversity.
In the context of W2SG, the choice of divergence is especially significant.
Weak teachers in W2SG provide imperfect supervision signals~\citep{burns2023weak,yang2024super,yao2025understanding}, and using forward KL divergence as the loss function may lead to overfitting to these noisy or incomplete guidance.
Reverse KL, on the other hand, allows the strong model to extract reliable patterns from weak supervision without being overly constrained by its imperfections.
This property aligns well with the goal of W2SG, where the focus is on leveraging weak supervision while avoiding its pitfalls.

Furthermore, reverse KL divergence has recently gained increasing attention in related fields such as domain adaptation~\citep{nguyen2021kl} and KL-regularized reinforcement learning~\citep{rafailov2024direct,wang2023beyond,ji2024towards}.
These applications share a conceptual similarity with W2SG, as they all involve transferring knowledge across domains or models under imperfect or constrained conditions.
Moreover, beyond classification tasks, reverse KL divergence has been increasingly utilized in generation tasks within knowledge distillation~\citep{gu2024minillm,agarwal2024policy,wu2024rethinking}, owing to its mode-seeking properties.
Given these developments, it is natural to investigate the role of reverse KL in classification within the W2SG framework.

\section{Preliminaries}

\subsection{Classification}
We consider $k$-classification tasks.
Given the data domain $\cX \subseteq \R^d$ and output domain $\cY \subseteq \R^k$, let the model space be $\cF: \cX \to \cY$. 
Consider the model's outputs form a valid probability distribution, i.e., $\forall y = (y_1, \cdots, y_k)^T \in \cY$, there holds $\sum_{i=1}^k y_i=1$ and $0 < y_i \le 1$.
The forward and reverse KL divergence losses are defined below.
\begin{definition}[KL divergence losses]
Given the data distribution $\cP$ and two models $g,h \in \cF$, the \textbf{forward} KL divergence loss is defined as:
\begin{align*} 
\kl(g,h) & \triangleq \bE_{x \sim \cP} \left[ \mathrm{D}_{\mathrm{KL}}(g(x) \| h(x)) \right], \\ 
& = \bE_{x \sim \cP} \left[ \sum_{i=1}^k [g(x)]_i \log \frac{[g(x)]_i}{[h(x)]_i} \right],
\end{align*}
where $[g(x)]_i, [h(x)]_i$ represent the $i$-th elements of $g(x), h(x)$, respectively. 
Thus, the \textbf{reverse} KL divergence loss is $\kl(h,g)$.
\end{definition}

As illustrated in~\cref{fig:comparison_kl}, forward KL promotes full coverage of the target distribution, whereas reverse KL focuses on capturing the dominant mode.
Additionally, the difference between KL divergence and CE is an entropy term:
\begin{definition}[Cross-entropy losses]
Given the data distribution $\cP$ and two models $g,h \in \cF$, define the \textbf{forward} cross-entropy divergence loss:
\begin{align*} 
\cross(g,h) & \triangleq -\bE_{x \sim \cP} \left[ \sum_{i=1}^k [g(x)]_i \log [h(x)]_i \right] \\ & = \kl(g,h) + \bE_{x \sim \cP} \; H(g(x)),
\end{align*}
where $H(\cdot)$ is the Shannon entropy.
Thus, the \textbf{reverse} cross-entropy loss is $\cross(h,g)$.
\end{definition}

Consequently, note that when minimizing forward losses, the model $g$ is fixed to provide supervision signals. Thus, minimizing forward KL divergence loss is equivalent to minimizing standard CE loss as $\bE_{x \sim \cP} \; H(g(x))$ is a constant.

\subsection{Weak-to-Strong Generalization}

Consider W2SG in the context of $k$-classification tasks. We focus on the fine-tuning phase after pre-training.
The labeling function $F^\star$ maps data $x$ to its label $F^\star(x)$.
The strong model aims to learn $F_{sw} = f \circ h_s$, where $h_s$ is a fixed strong model representation and $f \in \cF_{s}$ is a task-specific function from a hypothesis class $\cF_{s}$.
In the convention setting of AI alignment~\citep{ouyang2022training}, the model is fine-tuned through ground truth data:
\begin{align}
    \label{eqn:alignment-population-minimizer}
    f_{s} = \argmin_{f \in \cF_{s}}\; \dist(F^\star, f \circ h_s),
\end{align}
where the loss $\dist(\cdot, \cdot)$ can be $\kl(\cdot, \cdot)$ or $\cross(\cdot, \cdot)$.
However, 
it is humans who provide weak supervision in the super-alignment scenario~\citep{openai_superalignment}. 
To explore this, the W2SG framework~\citep{burns2023weak} leverages a weak model’s predictions to supervise the strong model:
\begin{align}
    \label{eqn:fsw-population-minimizer}
    f_{sw} = \argmin_{f \in \cF_{s}}\; \dist(F_w, f \circ h_s),
\end{align}
where $F_w$ is a given weak model, and $\dist(\cdot, \cdot)$ is originally the standard CE loss.
If we employ reverse losses, the objective transforms into
\begin{align} \label{eqn:rkl-minimizer}
    f_{sw}^{r} = \argmin_{f \in \cF_{s}}\; \dist(f \circ h_s, F_w).
\end{align}
Regardless of the choice of loss function, the core objective is replacing ground truth data with weak supervision.
Thus, while minimizing forward losses $\dist(F_w, F_{sw})$ or reverse losses $\dist(F_{sw}, F_w)$, we simultaneously strive to achieve an $F_{sw}$ with a small generalization error $\dist(F^\star, F_{sw})$.

\section{Theoretical Analysis: Justifying Reverse KL in W2SG}

In Sections~\ref{sec:universal}, we establish that both reverse and forward losses offer comparable generalization guarantees for the strong model, indicating that \textit{reverse losses is at least as favorable as forward losses in terms of theoretical properties}.
Furthermore, our analysis in Section~\ref{sec:upper} uncovers an advantage of reverse KL divergence loss: \textit{with reverse KL loss employed in W2SG, the strong model is theoretically guaranteed to outperform the weak model} by at least the magnitude of their disagreement under some assumptions.

\subsection{Generalization Analysis of Both Losses} \label{sec:universal}

We establish that both reverse and forward losses yield comparable generalization guarantees by deriving upper and lower bounds for their respective generalization errors.
We begin with a universal result for both forward and reverse losses.

\paragraph{Upper and lower bounds.}
We extend~\citet{yao2025understanding} and establish bounds of strong model's performance.
Unlike most previous work that focuses only on forward KL and CE loss, we comprehensively examine all four loss variants: forward KL, reverse KL, forward CE, and reverse CE.

\begin{lemma}[Proved in \cref{proof_lemma_inf}] \label{lemma:upper_lower_inf}
Let $\dist(\cdot, \cdot)$ be $\kl(\cdot, \cdot)$ or $\cross(\cdot, \cdot)$.
Given the data domain $\cX$, output domain $\cY$ and models $F_w, F^\star$ defined above. 
For any strong model $F_{sw}$, there holds
\begin{align*}
     \left| \dist(F^\star,F_w) -\dist(F^\star, F_{sw}) \right| \le C_1 \sqrt{d(F_w, F_{sw})} . 
\end{align*}
where $C_1$ is a positive constant, $d(F_w, F_{sw})$ can be $\kl(F_w, F_{sw})$ or $\kl(F_{sw}, F_w)$, and $\dist(F^\star,F_{sw})$ and $\dist(F^\star,F_w)$ represent the error of strong model and weak model, respectively.
\end{lemma}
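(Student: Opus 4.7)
The plan is to reduce the two-sided bound of \cref{lemma:upper_lower_inf} to a single triangle-type inequality
$|\dist(F^\star, F_{sw}) - \dist(F^\star, F_w)| \le C_1\, d(F_w, F_{sw})$,
and then bound the left-hand side by a total variation distance between $F_w$ and $F_{sw}$ before applying Pinsker's inequality in the appropriate direction.

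First, I would perform an algebraic cancellation that handles all four loss variants uniformly. For the cross-entropy choice,
\[
\cross(F^\star, F_{sw}) - \cross(F^\star, F_w) \;=\; \mathbb{E}_{x\sim \cP} \sum_{i=1}^k [F^\star(x)]_i \log \frac{[F_w(x)]_i}{[F_{sw}(x)]_i},
\]
and the very same difference arises in the KL case since the $\mathbb{E}_x H(F^\star(x))$ term is common to $\kl(F^\star, F_{sw})$ and $\kl(F^\star, F_w)$. Thus controlling a single expression suffices for all four variants of $\dist$.

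Second, I would bound the log-ratio expression pointwise by a TV distance. Under the implicit boundedness assumption from the preliminaries that outputs are bounded below by some $y_{\min}>0$, the inequality $|\log a - \log b| \le |a-b|/\min(a,b)$ together with $[F^\star(x)]_i \le 1$ yields
\[
\left|\sum_{i=1}^k [F^\star(x)]_i \log \frac{[F_w(x)]_i}{[F_{sw}(x)]_i}\right| \;\le\; \frac{1}{y_{\min}} \sum_{i=1}^k \bigl|[F_w(x)]_i - [F_{sw}(x)]_i\bigr| \;=\; \frac{2}{y_{\min}}\, \tv(F_w(x), F_{sw}(x)).
\]
Third, Pinsker's inequality applies in either direction because $\tv(p,q)=\tv(q,p)$, giving both $\tv(p,q) \le \sqrt{\tfrac12\, \mathrm{D}_{\mathrm{KL}}(p\|q)}$ and $\tv(p,q) \le \sqrt{\tfrac12\, \mathrm{D}_{\mathrm{KL}}(q\|p)}$; whichever direction is chosen in $d(F_w,F_{sw})$ can be used. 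Taking expectation over $x$ and applying Jensen's inequality to commute the expectation past the square root produces the advertised constant $C_1 = \sqrt{2}/y_{\min}$.

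The main obstacle is the implicit assumption of a uniform lower bound $y_{\min}$ on the output probabilities; without such a bound the log-ratio is unbounded and the constant $C_1$ diverges. I would therefore state this mild softmax-output assumption explicitly at the start of the proof. A secondary subtlety is ensuring that the direction of Pinsker's inequality is matched to the direction of the KL appearing in $d$, which is precisely why \cref{lemma:upper_lower_inf} permits either $\sqrt{\kl(F_w, F_{sw})}$ or $\sqrt{\kl(F_{sw}, F_w)}$ in its statement; this symmetry is what ultimately renders reverse losses at least as well-controlled as forward ones, foreshadowing the parity claim made in \cref{sec:universal}.
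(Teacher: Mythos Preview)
Your proof is correct but follows a different and more direct route than the paper's. The paper proves \cref{lemma:upper_lower_inf} via the Donsker--Varadhan variational formula: it constructs a test function $g(j)=\frac{[F^\star(x)]_j}{p_g(j)}\log\frac{[F^\star(x)]_j}{p_g(j)}$ whose expectations under $\cP_w(x)$ and $\cP_{sw}(x)$ recover $\kl(F^\star,F_w)$ and $\kl(F^\star,F_{sw})$, then invokes subgaussianity of $g$ (via Hoeffding's lemma, using the range $[-\tfrac{1}{\gamma}\log\tfrac{1}{\gamma},\,\tfrac{1}{\gamma}\log\tfrac{1}{\gamma}]$) to control the difference of expectations by $\sqrt{2\sigma^2\,\kl}$. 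This yields $C_1=\frac{\sqrt{2}}{\gamma}\log\frac{1}{\gamma}$. Your argument bypasses the variational machinery entirely: after the same cancellation step you bound the log-ratio directly by $|a-b|/y_{\min}$, pass to TV, apply Pinsker and Jensen. This is more elementary and yields the sharper constant $\sqrt{2}/y_{\min}$, which is precisely the $C_2$ that the paper obtains later in \cref{theorem:residue}---but only as a one-sided lower bound, via the one-sided inequality $\log x\le x-1$. In effect, your two-sided mean-value bound subsumes both the paper's \cref{lemma:upper_lower_inf} and the improvement in \cref{theorem:residue} in a single stroke. The Donsker--Varadhan route is more extensible to other divergences and test functions, but here that generality costs a superfluous $\log(1/\gamma)$ factor.
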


Note that $d(F_w, F_{sw})$ captures the disagreement between the strong and weak models, which serves as the minimization objective in W2SG. 
\cref{lemma:upper_lower_inf} quantifies the difference between the weak and strong models' performance from two perspectives: a lower bound and an upper bound, which is similar to~\citet{yao2025understanding}.
The \textbf{lower bound} indicates that strong model's performance cannot be arbitrarily improved using weak supervision. 
Improving the strong model depends critically on ensuring $\dist\left( F^\star, F_w \right)$ is small, underscoring \textit{the importance of weak model's performance}.
Also, whether we choose forward or reverse loss, the student-supervisor disagreement $d(F_w, F_{sw})$ is minimized. 
While reducing $\dist\left( F^\star, F_{sw} \right)$ requires increasing $d(F_w, F_{sw})$, the lower bound also implies that strong model's performance gain may be inherently constrained by W2SG's own optimization objective~\citep{yao2025understanding}.
In other words, \textit{achieving the minimal optimization objective limits the strong model’s ability} to significantly outperform its weak supervisor.
The \textbf{upper bound} ensures that strong model's error $\dist(F^\star, F_{sw})$ remains bounded and do not be arbitrarily large.
It shows that a better weak model is also crucial to improve strong model's performance.
Building on these results, we further conduct a fine-grained analysis to investigate how to achieve tighter lower and upper bounds.

\paragraph{Tighter lower bound.}
Consider the lower bound in~\cref{lemma:upper_lower_inf}, we employ alternative proof techniques rooted in information-theoretic inequalities to derive a tighter lower bound.

\begin{theorem}[Proved in~\cref{constant:theorem}] \label{theorem:residue}
Let $\dist(\cdot, \cdot)$ be $\kl(\cdot, \cdot)$ or $\cross(\cdot, \cdot)$.
Given $F_{sw}, F_w, F^\star$, then
\begin{align*}
    & \dist(F^\star, F_{sw}) \ge \dist(F^\star, F_w) - C_2 \sqrt{d(F_w, F_{sw})},
\end{align*}
where $C_2$ is a positive constant, and $d(F_w, F_{sw})$ can be $\kl(F_w, F_{sw})$ or $\kl(F_{sw}, F_w)$.
\end{theorem}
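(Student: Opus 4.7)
The plan is to upper-bound $\kl(F^\star, F_w) - \kl(F^\star, F_{sw})$ by $C_2\, d(F_w, F_{sw})$, which is equivalent to the claimed inequality. The first observation is a unification step: since $\cross(F^\star, Q) - \kl(F^\star, Q) = \bE_x H(F^\star(x))$ does not depend on $Q$, the difference $\dist(F^\star, F_w) - \dist(F^\star, F_{sw})$ is identical for $\dist = \kl$ and $\dist = \cross$, and in both cases equals $\Delta := \bE_x \sum_i [F^\star(x)]_i \log\bigl([F_{sw}(x)]_i / [F_w(x)]_i\bigr)$. Thus it suffices to bound this single quantity, and all four loss variants are handled simultaneously.

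The main technical step is an interpolation-based Lipschitz estimate. I would define $Q_t(x) := (1-t) F_w(x) + t F_{sw}(x)$ for $t \in [0,1]$ and set $\phi(t) := \kl(F^\star, Q_t)$, so that $\Delta = \phi(0) - \phi(1) = -\int_0^1 \phi'(t)\, dt$, with
\begin{equation*}
\phi'(t) = -\bE_x \sum_i \frac{[F^\star(x)]_i\,\bigl([F_{sw}(x)]_i - [F_w(x)]_i\bigr)}{[Q_t(x)]_i}.
\end{equation*}
Under the standard assumption that the outputs are bounded below by some $\varepsilon > 0$ (so $[Q_t(x)]_i \ge \varepsilon$), together with $[F^\star(x)]_i \le 1$, the crude $\ell^\infty$ estimate $[F^\star(x)]_i / [Q_t(x)]_i \le 1/\varepsilon$ gives $|\phi'(t)| \le \varepsilon^{-1} \bE_x \|F_w(x) - F_{sw}(x)\|_1 = 2\varepsilon^{-1}\, \bE_x \tv(F_w(x), F_{sw}(x))$ uniformly in $t$, and therefore $|\Delta| \le 2\varepsilon^{-1}\, \bE_x \tv(F_w(x), F_{sw}(x))$.

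The final step converts total variation back to KL divergence: apply Pinsker's inequality $\tv(P, Q) \le \sqrt{\frac{1}{2}\, \mathrm{D}_{\mathrm{KL}}(P \| Q)}$ pointwise in $x$ (in either direction, which accounts for the two admissible forms of $d$), and then use the concavity of $\sqrt{\cdot}$ together with Jensen's inequality to move the expectation inside the square root:
\begin{equation*}
\bE_x \tv(F_w(x), F_{sw}(x)) \le \sqrt{\tfrac{1}{2}\, \kl(F_w, F_{sw})} \quad \text{or} \quad \sqrt{\tfrac{1}{2}\, \kl(F_{sw}, F_w)}.
\end{equation*}
Chaining these bounds gives $|\Delta| \le (\sqrt{2}/\varepsilon)\, d(F_w, F_{sw})$, which immediately yields the claimed lower bound with $C_2 = \sqrt{2}/\varepsilon$ for either choice of $d$.

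The hardest part is arguing that this $C_2$ is genuinely tighter than the $C_1$ of \cref{lemma:upper_lower_inf}. The improvement hinges on the fact that the linear-path argument isolates KL distances between $F_w$ and $F_{sw}$ alone, whereas a direct triangle-type manipulation tends to import $F^\star$-dependent terms into the constant, inflating it. If one wishes to sharpen $C_2$ still further, the natural refinement is to replace the crude $\ell^\infty$ bound on $[F^\star(x)]_i / [Q_t(x)]_i$ by a weighted Cauchy--Schwarz or $\chi^2$-type inequality against the measure $Q_t(x)$, which would trade the $1/\varepsilon$ factor for a more adaptive dependence on the ratio $F^\star/Q_t$; this is the step where any additional gain over $\sqrt{2}/\varepsilon$ is most likely hidden.
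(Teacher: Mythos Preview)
Your proof is correct and arrives at the same constant $C_2 = \sqrt{2}/\varepsilon$ as the paper (where the paper writes $\gamma$ for your $\varepsilon$), but the route to the $\ell^1$ bound on $F_w - F_{sw}$ is genuinely different. The paper does not interpolate: it writes the exact identity $\dist(F^\star,F_w) - \dist(F^\star,F_{sw}) = \langle F^\star, \log(F_{sw}/F_w)\rangle_E$, then applies the one-sided inequality $\log x \le x-1$ together with the element-wise bound $\frac{1}{\gamma}F_w \ge \vec{1}$ to obtain $\langle F^\star, \log(F_{sw}/F_w)\rangle_E \le \frac{1}{\gamma}\langle F^\star, |F_{sw}-F_w|\rangle_E$, and from there proceeds exactly as you do (H\"older/$\ell^\infty$, total variation, Jensen, Pinsker). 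Your linear-path argument $\phi(t)=\kl(F^\star,Q_t)$ replaces the $\log x \le x-1$ step by the mean-value-type bound $|\log a - \log b| \le |a-b|/\min(a,b)$, which has the minor advantage of being two-sided (you get $|\Delta|$ bounded, not just $\Delta$), though only the upper bound is needed for the theorem. Both approaches are short and yield the identical constant; your speculative discussion of why $C_2 < C_1$ is less concrete than the paper's, which simply computes $C_1 = \frac{\sqrt{2}}{\gamma}\log\frac{1}{\gamma}$ explicitly and observes that $C_2 \le C_1$ whenever $\gamma < 1/e$.
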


\begin{remark}
$C_2$ is generally smaller than $C_1$, leading to a tighter lower bound than~\cref{lemma:upper_lower_inf}.
\end{remark}

Similar to~\cref{lemma:upper_lower_inf}, it also highlights the importance of selecting a well-generalizing weak model and cautious optimization of the strong model to prevent overfitting to weak supervision.
Note that~\cref{theorem:residue} applies to both forward and reverse losses, which share the same theoretical properties.

\paragraph{Tighter upper bound.}
In~\cref{lemma:upper_lower_inf},
there is no theoretical guarantee that the strong model will necessarily surpass the performance of its weak supervisor in W2SG, such as $\dist(F^\star, F_{sw}) \le \dist\left( F^\star, F_w \right)$.
This raises the question of whether a tighter upper bound can be derived.
Therefore, we first explore how to achieve this goal.

\begin{proposition}[Proved in~\cref{proof:general_equation}] \label{prop:general_equation}
Let $\dist(\cdot, \cdot)$ be $\kl(\cdot, \cdot)$ or $\cross(\cdot, \cdot)$. Given $F_{sw}, F_w, F^\star$, then there holds
\begin{align*}
    \dist(F^\star, F_{sw}) = \dist(F^\star, F_w) - \underbrace{\left \langle F^\star, \log{\frac{F_{sw}}{F_w}} \right \rangle_E}_{R},
\end{align*}
where the expectation inner product is defined as $\left \langle f,g \right \rangle_E \triangleq \bE_{x \sim \cP} [f(x)^Tg(x)]$.
\end{proposition}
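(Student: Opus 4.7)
The plan is to prove the identity by direct algebraic expansion, handling the cross-entropy and KL cases separately and observing that they reduce to the same decomposition. The key observation is that the expression $R = \langle F^\star, \log(F_{sw}/F_w)\rangle_E$ is coordinate-wise, i.e.\ it equals $\bE_{x\sim\cP}\sum_{i=1}^k [F^\star(x)]_i \log\!\bigl([F_{sw}(x)]_i/[F_w(x)]_i\bigr)$, so both proofs reduce to splitting a log-ratio.

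First, I would treat the cross-entropy case. Using the definition $\cross(F^\star, F_{sw}) = -\bE_x \sum_i [F^\star(x)]_i \log [F_{sw}(x)]_i$ and the analogous expression for $\cross(F^\star, F_w)$, I would form the difference $\cross(F^\star, F_{sw}) - \cross(F^\star, F_w)$ and combine the logs. The outer factor $[F^\star(x)]_i$ is identical in both terms, so the two expectations merge into a single expectation of $-[F^\star(x)]_i \log\!\bigl([F_{sw}(x)]_i/[F_w(x)]_i\bigr)$, which is exactly $-R$. Rearranging yields $\cross(F^\star, F_{sw}) = \cross(F^\star, F_w) - R$.

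Next, I would handle the KL case. Writing
\[
\kl(F^\star, F_{sw}) - \kl(F^\star, F_w) = \bE_x \sum_i [F^\star(x)]_i \Bigl(\log\tfrac{[F^\star(x)]_i}{[F_{sw}(x)]_i} - \log\tfrac{[F^\star(x)]_i}{[F_w(x)]_i}\Bigr),
\]
the two $\log [F^\star(x)]_i$ contributions cancel, leaving $\bE_x \sum_i [F^\star(x)]_i \log\!\bigl([F_w(x)]_i/[F_{sw}(x)]_i\bigr) = -R$. Thus $\kl(F^\star, F_{sw}) = \kl(F^\star, F_w) - R$, matching the CE identity. Combining both cases under the single notation $\dist$ gives the statement.

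There is no real obstacle here; the proposition is a telescoping identity at heart. The only point requiring care is a clean, unambiguous interpretation of the vector notation $\log(F_{sw}/F_w)$ as a coordinate-wise log-ratio of probability vectors, so that $\langle F^\star, \log(F_{sw}/F_w)\rangle_E$ is exactly the residual term that appears after subtracting the two losses. Once that convention is fixed, both the CE and KL computations collapse in the same way because the $F^\star$-dependent term is common to both losses of each type.
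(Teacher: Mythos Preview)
Your proposal is correct and matches the paper's own argument: both amount to the same log-splitting/telescoping identity $\log\frac{[F^\star]_i}{[F_w]_i}=\log\frac{[F^\star]_i}{[F_{sw}]_i}+\log\frac{[F_{sw}]_i}{[F_w]_i}$ summed against $[F^\star]_i$ under expectation. The only cosmetic difference is that the paper proves the KL case first and then notes that CE follows because $\dist(F^\star,\cdot)-\kl(F^\star,\cdot)$ is the entropy of $F^\star$ (independent of the second argument), whereas you verify CE and KL separately.
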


\begin{remark}
    It can also be extended to reverse KL and squared loss, as detailed in~\cref{proof:general_equation}.
\end{remark}

Therefore, $\dist(F^\star, F_{sw}) \le \dist\left( F^\star, F_w \right)$ satisfies \textit{if and only if} $R \ge 0$.
To achieve it, we aim to establish a clear relationship between model capacity and model confidence across all data points and all $k$ classes.
Specifically, for any $x \in \cX$ and $i \in \{ 1, \cdots, k  \}$, a positive $[F^\star(x)]_i \log{\frac{[F_{sw}(x)]_i}{[F_w(x)]_i}}$ ensures a positive $R$.
Therefore, we expect the model predictions to satisfy either of the two inequalities:
\begin{align}
    & [F^\star(x)]_i \ge [F_{sw}(x)]_i \ge [F_w(x)]_i, \label{ineq:1} \\ 
    & [F^\star(x)]_i \le [F_{sw}(x)]_i \le [F_w(x)]_i. \label{ineq:2}
\end{align}
In other words, the predicted probabilities of $F_{sw}$ reflect the true outcome better than $F_w$.
Intuitively, because the weak model is pre-trained and fine-tuned on ground truth data, we can trust its decisions for major classes.
As shown in~\cref{fig:comparison_kl}, reverse KL’s mode-seeking behavior encourages the strong model to focus on the weak model’s high-confidence predictions, while disregarding low-probability, potentially noisy regions. 
This behavior facilitates the fulfillment of Inequality~\eqref{ineq:1}-\eqref{ineq:2}. 
In contrast, forward KL, with its mass-covering nature, forces the strong model to match the entire probability distribution, including unreliable signals from the weak model’s lower-probability classes, thereby hindering the fulfillment of the above inequalities. 
In the context of W2SG, where weak supervision is imperfect, reverse KL’s focus on high-confidence decisions provides stronger guarantees for strong model's performance. 
In particular, the theoretical analysis in the following section further supports this, demonstrating that reverse KL can theoretically ensure superior performance for the strong model in certain settings.


\subsection{Theoretical Analysis of Reverse Losses} \label{sec:upper}

To achieve a tighter upper bound, our theoretical analysis below yields an intriguing insight: \textit{when using reverse KL in W2SG, an adequate pre-training and subsequent last linear layer fine-tuning guarantee that the strong student can outperform its weak teacher} by at least the magnitude of their disagreement (i.e., $R \ge 0$ in~\cref{prop:general_equation}).

\begin{theorem}[Proved in \cref{theorem1_kl_loss}]
\label{thm:realizable}
Consider W2SG using reverse KL divergence loss in~\cref{eqn:rkl-minimizer}.
Denote $F_{sw}=f^r_{sw} \circ h_s$.
Assume that the function class $\cF_{s}$ is a convex set and $\exists f_s \in \cF_s$ such that $f_s \circ h_s = F^\star$.
Then:
\begin{align*}
    \kl\left(F^\star, F_{sw}\right) \leq \kl\left(F^\star, F_w\right)-\kl\left(F_{sw}, F_w\right).
\end{align*}
\end{theorem}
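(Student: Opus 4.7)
The plan is to recognize this bound as an instance of the \emph{generalized Pythagorean inequality} for KL divergence. Since $\cF_s$ is convex and composition with the fixed feature map $h_s$ commutes with convex combinations, the image $\cC \triangleq \{f \circ h_s : f \in \cF_s\}$ is a convex set of probability-valued functions. By the realizability assumption, $F^\star \in \cC$; and by definition, $F_{sw} \in \cC$ minimizes $F \mapsto \kl(F, F_w)$ over $\cC$ (the classical $I$-projection of $F_w$ onto $\cC$). The desired inequality rearranges to the Pythagorean decomposition $\kl(F^\star, F_w) \ge \kl(F^\star, F_{sw}) + \kl(F_{sw}, F_w)$, which I would derive directly from a first-order optimality condition.

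Concretely, for $t \in [0,1]$ the interpolant $F_t \triangleq (1-t)F_{sw} + t F^\star = \bigl((1-t)f_{sw} + t f_s\bigr)\circ h_s$ lies in $\cC$, so the minimality of $F_{sw}$ forces the right derivative of $t \mapsto \kl(F_t, F_w)$ at $t=0^+$ to be nonnegative. Differentiating the integrand using $\frac{d}{du}(u\log u) = \log u + 1$, and exploiting that $\sum_i ([F^\star(x)]_i - [F_{sw}(x)]_i) = 0$ to kill the ``$+1$'' contribution, this derivative reduces to
\begin{align*}
\E_{x\sim\cP} \sum_{i=1}^k \bigl([F^\star(x)]_i - [F_{sw}(x)]_i\bigr)\log\frac{[F_{sw}(x)]_i}{[F_w(x)]_i} \;\ge\; 0.
\end{align*}
The $-[F_{sw}(x)]_i$ piece is exactly $-\kl(F_{sw}, F_w)$, while the $[F^\star(x)]_i$ piece equals $\kl(F^\star, F_w) - \kl(F^\star, F_{sw})$ after inserting $\pm\log[F^\star(x)]_i$ inside the logarithm. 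Rearranging delivers the stated bound.

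The main obstacle will be the calculus step: verifying that the path $F_t$ genuinely lies in $\cC$ (handled by convexity of $\cF_s$ plus linearity of post-composition with $h_s$, since convex combinations of probability vectors remain probability vectors) and justifying the interchange of differentiation with the expectation and sum (routine because $\cY$ restricts coordinates to $(0,1]$, keeping the logarithms and their derivatives bounded on the interpolation path). It is worth emphasizing that this argument is tied to the \emph{reverse} orientation: the symmetric variational inequality for minimizing $\kl(F_w, F)$ over $\cC$ does not produce a Pythagorean bound with the desired direction, so the structural asymmetry of KL is precisely what lets reverse KL guarantee $\kl(F^\star, F_{sw}) \le \kl(F^\star, F_w)$ up to the disagreement term.
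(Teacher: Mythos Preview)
Your proposal is correct and follows essentially the same approach as the paper: both arguments use the convex interpolant $F_t=(1-t)F_{sw}+tF^\star\in V_s$ together with first-order optimality of $F_{sw}$ to obtain $\E_x\sum_i([F^\star(x)]_i-[F_{sw}(x)]_i)\log\tfrac{[F_{sw}(x)]_i}{[F_w(x)]_i}\ge 0$, after which the three-point identity yields the bound. The only cosmetic difference is that the paper packages the computation through a Bregman-divergence decomposition (so as to treat squared loss and KL uniformly) and separates $\cO(t)$ from $\cO(t^2)$ explicitly, whereas you differentiate $t\mapsto\kl(F_t,F_w)$ directly and invoke the Pythagorean/$I$-projection viewpoint---a route the paper itself attributes to the concurrent work of \citet{mulgund2025relating}.
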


\begin{remark}
Similar result can be naturally extended to reverse CE loss.
Our proof leverages Bregman divergence, a generalization of both squared loss and KL divergence. 
This approach not only broadens the applicability of our results but also demonstrates how our framework naturally recovers the regression analysis~\citep{charikar2024quantifying}.
The concurrent work~\citep{mulgund2025relating} also independently explores the application of Bregman divergence in this context, establishing their Theorem 4.1 and Corollary 4.2, which exhibit parallels with our results.
The above extension and discussion are detailed in~\cref{theorem1_kl_loss}.
\end{remark}

The assumptions are consistent with previous theory~\citep{charikar2024quantifying,yao2025understanding}.
Firstly, the convex set assumption includes the case that $\cF_s$ is the class of all linear functions, which shares similar conceptual framework of last linear layer fine-tuning~\citep{howard2018universal,kumar2022fine,mao2023last,kirichenko2023last}.
Secondly, we consider the case where $\exists f_s \in \cF_s$ such that $f_s \circ h_s = F^\star$.
It shows the remarkable capability of pre-training. 
It assumes that the representation $h_s$ has already captured a wealth of information during pre-training, a phenomenon well-demonstrated by modern pre-trained LLMs~\citep{touvron2023llama,achiam2023gpt}.

\cref{thm:realizable} establishes that in W2SG, using the reverse KL divergence loss guarantees that the strong model, trained with weak supervision, surpasses the weak model by at least their disagreement, $\kl(F_{sw}, F_w)$.
This upper bound is tighter than~\cref{lemma:upper_lower_inf}, as~\cref{lemma:upper_lower_inf} does not ensure that the strong model surpasses the weak model.
\cref{thm:realizable} highlights the superior theoretical benefits of reverse losses compared to forward losses.

Now we draw $n$ i.i.d. samples to perform W2SG and relax the assumption, where for any $f_s \in \cF_s$, $\exists f_s \circ h_s = F^\star$ may not be satisfied.
The unique result for reverse KL below further emphasizes its advantageous theoretical properties in W2SG.

\begin{theorem}[Proved in~\cref{proof_non-realizable}] \label{thm:non-realizable-finite-samples}
Given $F_{sw}$ defined in~\cref{thm:realizable}.
Assume that $\cF_s$ is a convex set.
Consider W2SG using reverse KL divergence loss with $n$ i.i.d. samples:
\begin{align*}
    \hat{f}^r_{sw} = \argmin_{f \in \cF_{s}}\; \widehat{\kl}(f \circ h_s, F_w),
\end{align*}
where $\widehat{\kl}(\cdot, \cdot)$
is the empirical version of $\kl(\cdot, \cdot)$.
Denote $\hat{F}_{sw}=\hat{f}^r_{sw} \circ h_s$ and strong ceiling model's generalization error $\eps = \kl(F^\star, F_s)$.
With probability at least $1-\delta$, there holds
\begin{align*} 
& \kl(F^\star, \hat{F}_{sw}) \le \kl(F^\star, F_w) - \kl(\hat{F}_{sw}, F_w) \\ & \hspace{0.05cm} + \cO(\sqrt{\eps}) +  \cO\left(\sqrt{\frac{\cC_{\cF_s}}{n}}\right) + \cO\left(\sqrt{\frac{\log(1/\delta)}{n}}\right),
\end{align*}
where $\cC_{\cF_s}$ is a constant capturing the complexity of the function class $\cF_s$.
The asymptotic notation is for $\eps \to 0, n \to \infty$.
\end{theorem}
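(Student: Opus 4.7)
The plan is to lift the population identity behind \cref{thm:realizable} to the finite-sample, non-realizable setting through two decoupled perturbations: an $\cO(\sqrt{\eps})$ slack absorbing the gap $F^\star \notin \cF_s \circ h_s$, and uniform-convergence terms of order $\sqrt{\cC_{\cF_s}/n}$ and $\sqrt{\log(1/\delta)/n}$ absorbing the empirical-vs.-population gap. The key tool is the Bregman three-point identity for reverse KL, obtained by rearranging~\cref{prop:general_equation} with $\hat F_{sw}$ in place of $F_{sw}$:
\begin{align*}
    \kl(F^\star, F_w) \;=\; \kl(F^\star, \hat F_{sw}) \;+\; \kl(\hat F_{sw}, F_w) \;+\; \Bigl\langle \log\tfrac{\hat F_{sw}}{F_w},\; F^\star - \hat F_{sw} \Bigr\rangle_E.
\end{align*}
This reduces the claimed bound to lower-bounding the residual inner product by $-\cO(\sqrt{\eps}) - \cO(\sqrt{\cC_{\cF_s}/n}) - \cO(\sqrt{\log(1/\delta)/n})$.

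To handle non-realizability I would insert the strong ceiling $F_s$ and split the residual as $\langle \log(\hat F_{sw}/F_w),\, F_s - \hat F_{sw}\rangle_E + \langle \log(\hat F_{sw}/F_w),\, F^\star - F_s\rangle_E$. The second piece is controlled via Cauchy--Schwarz together with Pinsker's inequality, giving $|\langle \log(\hat F_{sw}/F_w), F^\star - F_s \rangle_E| \le \|\log(\hat F_{sw}/F_w)\|_\infty \cdot \|F^\star - F_s\|_1 = \cO(\sqrt{\kl(F^\star, F_s)}) = \cO(\sqrt{\eps})$, using the boundedness of the log-ratio that follows from the positivity condition $0 < y_i \le 1$ in Section~3.1. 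The first piece encodes empirical first-order optimality of $\hat f_{sw}$ along the feasible direction $f_s - \hat f_{sw}$, which is valid since $\cF_s$ is convex; it is therefore non-negative when evaluated with the empirical measure, and converting it to the population inner product incurs the cost of a single uniform deviation.

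For this uniform deviation, I would apply a standard Rademacher-complexity or symmetrization bound to the bounded log-loss class $\{x \mapsto \sum_i [f(h_s(x))]_i \log([f(h_s(x))]_i / [F_w(x)]_i) : f \in \cF_s\}$, yielding $\sup_{f \in \cF_s} |\widehat{\kl}(f\circ h_s, F_w) - \kl(f \circ h_s, F_w)| = \cO(\sqrt{\cC_{\cF_s}/n}) + \cO(\sqrt{\log(1/\delta)/n})$ with probability at least $1-\delta$, together with an analogous bound on the linear first-order functional. Combining these deviations with the residual split completes the argument. The main obstacle is the non-realizability handling: the Bregman Pythagorean inequality is tight only when the anchor lies in the convex set, so one must ensure that anchoring at $F_s$ rather than $F^\star$ costs only $\cO(\sqrt{\eps})$ rather than $\cO(1)$, which is precisely where Pinsker's inequality supplies the square-root improvement. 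A secondary technicality is that the log-loss class requires a uniform lower bound on predicted probabilities to be bounded and Lipschitz for the Rademacher argument, a mild condition consistent with Section~3.1.
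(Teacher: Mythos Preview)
Your proposal is correct and takes a genuinely different route from the paper's proof. The key divergence is in how the optimality of $\hat F_{sw}$ is exploited. You work directly with the \emph{empirical} first-order optimality of $\hat f_{sw}$: since $\cF_s$ is convex, the directional derivative of $\widehat{\kl}(\cdot,F_w)$ at $\hat F_{sw}$ along $F_s - \hat F_{sw}$ is non-negative, giving $\widehat{\bE}[\langle \log(\hat F_{sw}/F_w),\, F_s-\hat F_{sw}\rangle] \ge 0$; one uniform-deviation step then transfers this to the population inner product. The paper instead routes everything through the \emph{population} reverse-KL projection $F_{sw}$. It applies the population Pythagorean inequality from \cref{thm:realizable} twice (with anchors $F_s$ and $\hat F_{sw}$), introduces nine auxiliary quantities $A,\dots,I$ and three residual inner products $t_1=\langle F^\star-F_s,\log(F_{sw}/F_s)\rangle_E$, $t_2=\langle F^\star-F_s,\log(F_w/F_s)\rangle_E$, $t_3=\langle \hat F_{sw}-F^\star,\log(F_{sw}/\hat F_{sw})\rangle_E$, then bounds $t_1,t_2$ via Pinsker (as you do) and $t_3$ by first showing $\kl(\hat F_{sw},F_{sw})$ is small via uniform convergence plus the second Pythagorean application. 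Your argument collapses this into a single three-point identity and a single split, which is cleaner; the cost is that you need uniform convergence over the first-order functional class $\{x\mapsto \langle \log(f(h_s(x))/F_w(x)),\, F_s(x)-f(h_s(x))\rangle : f\in\cF_s\}$ rather than just the reverse-KL loss class, whereas the paper can in principle reuse its loss-level uniform-convergence lemma (\cref{lem:uniform-convergence}) more directly. Both require the same boundedness/positivity hypothesis on predicted probabilities, and both yield the same asymptotic orders.
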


Compared to~\cref{thm:realizable}, this bound introduces two additional error terms: the first term $\cO(\sqrt{\eps})$ is small due to the capability of the strong ceiling model $F_s$.
The remaining two error terms, which are of the order $\cO \left( 1/\sqrt{n} \right)$, stem from the strong model $\hat{F}_{sw}$ being trained on a finite weakly-labeled dataset. 
These terms also diminish asymptotically as the sample size $n$ increases.
Overall, by using a sufficiently large dataset and a strong model with enough capacity, we achieve a large $n$ and a very small $\eps$, rendering the remainders in \cref{thm:non-realizable-finite-samples} negligible and increasing the likelihood that the theoretical guarantee in \cref{thm:realizable} holds.
\cref{thm:non-realizable-finite-samples} aligns with previous wisdom~\citep{charikar2024quantifying,yao2025understanding}. 
However, whereas their corresponding bounds are specifically designed for regression tasks, our result offers new insights into applying reverse KL loss in classification tasks.

\section{Empirical Validation}

\begin{figure*}[t]
\begin{center}
\subfigure[Results of GPT-2-series on CAI-Harmless]{ 
\begin{minipage}[t]{0.95\linewidth}  \centerline{\includegraphics[width=1\linewidth]{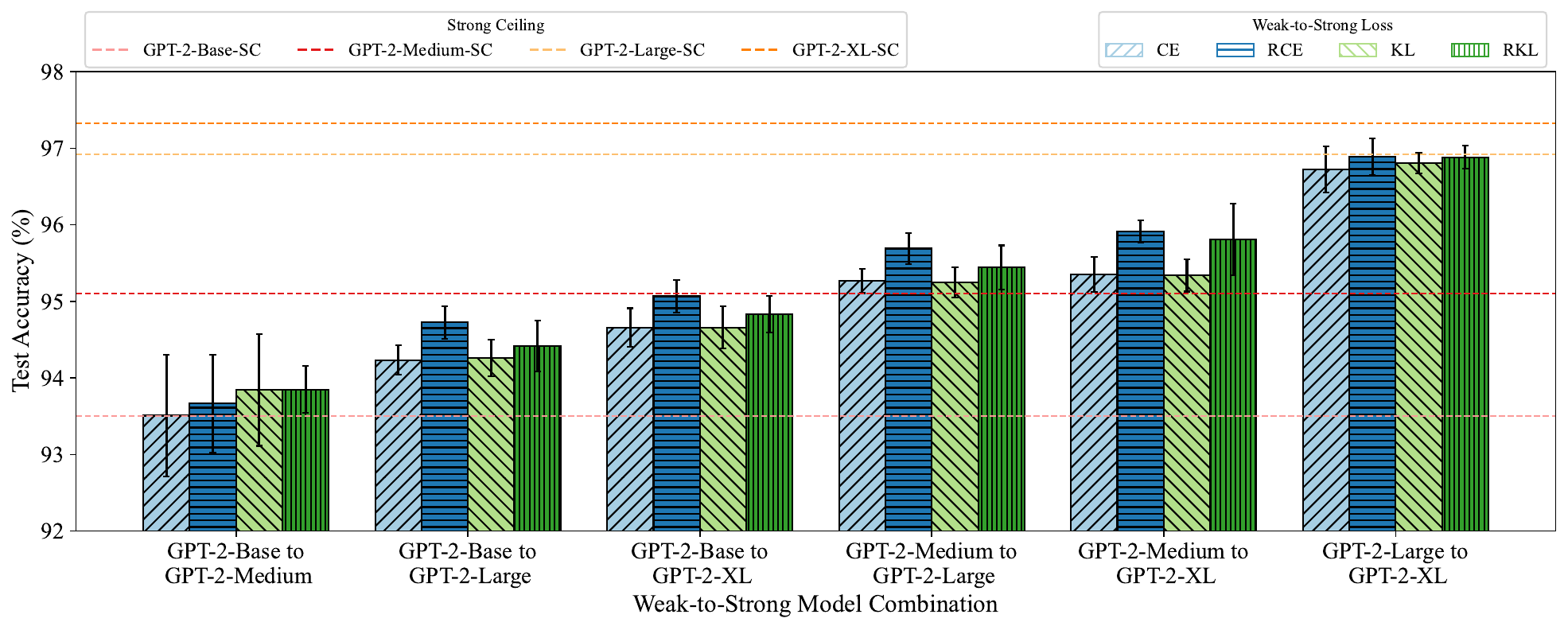}}
\end{minipage}
}
\vspace{-3pt}
\subfigure[Results of GPT-2-series on helpful set of HH-RLHF]{
\begin{minipage}[t]{0.95\linewidth}
\centerline{\includegraphics[width=1\linewidth]{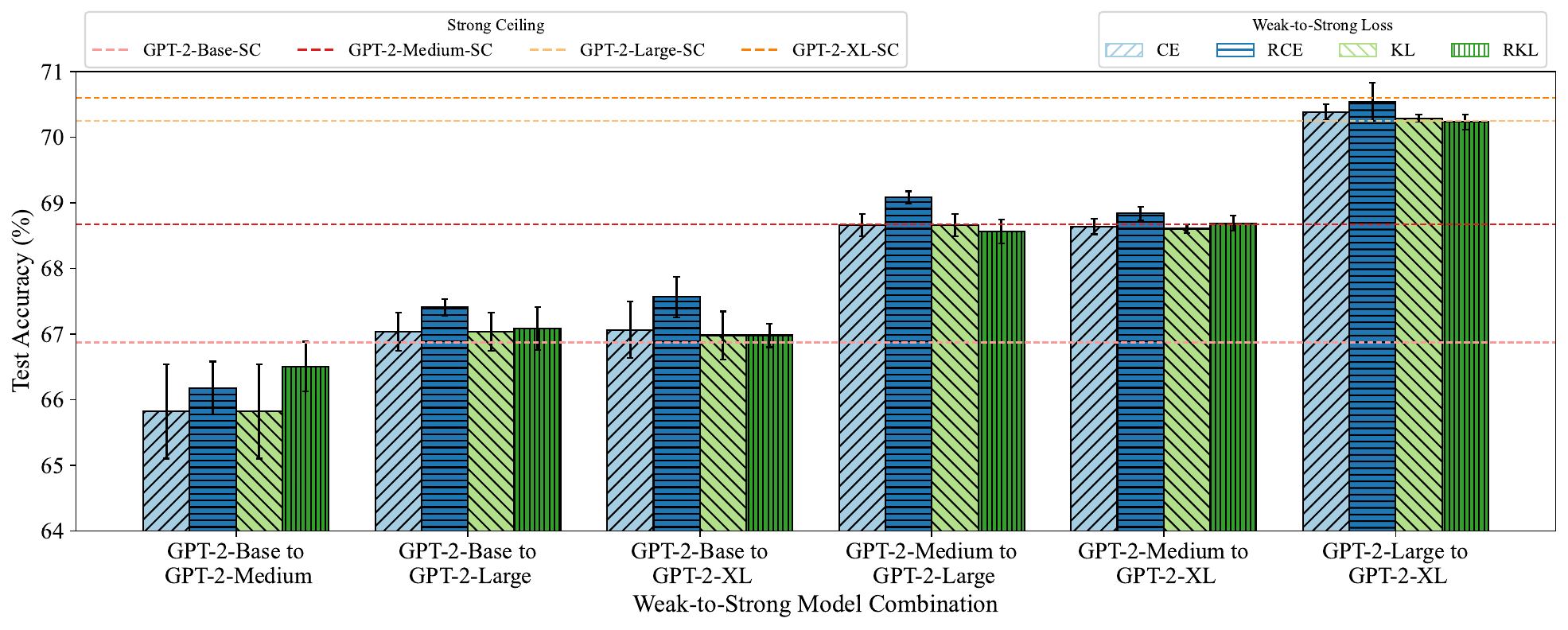}}
\end{minipage}  
}    
\vspace{-7pt}
\caption{Results of GPT-2-series. ``SC'' denotes the strong ceiling model, and ``A to B'' indicates the use of weak teacher ``A'' to supervise strong student ``B''. The terms CE, RCE, KL, and RKL refer to CE loss, reverse CE loss, forward KL divergence loss, and reverse KL divergence loss, respectively. Error bars represent the standard deviation across three runs of the experiment.}
\label{fig:cai}
\end{center}
\vspace{-15pt}
\end{figure*}

In this section, we empirically compare reverse KL, forward KL, reverse CE, and standard CE losses in the context of W2SG. 
Our experiments directly support the claim that reverse losses outperform forward losses in most experimental settings.

\begin{table*}[t]
\centering
\begin{tabular}{cccccc}
\toprule
\textbf{Models}                     & \textbf{CE}                 & \textbf{Reverse CE}         & \textbf{KL}                 & \textbf{Reverse KL}         \\ 
\midrule
3B $\rightarrow$ 7B  & 95.000 $\pm$ 0.177   & \textbf{95.358 $\pm$ 0.225} & 95.067 $\pm$ 0.250  & \underline{95.283 $\pm$ 0.153} \\ 
3B $\rightarrow$ 14B & 95.900 $\pm$ 0.001   & \textbf{96.100 $\pm$ 0.001}   & 95.700 $\pm$ 0.003   & \underline{95.800 $\pm$ 0.003}   \\ 
7B $\rightarrow$ 14B & 96.117 $\pm$ 0.003 & \textbf{96.817 $\pm$ 0.001} & 96.441 $\pm$ 0.003 & \underline{96.508 $\pm$ 0.001} \\
\bottomrule
\end{tabular}
\caption{Results on Qwen2.5 series. ``3B $\rightarrow$ 7B'' means using Qwen2.5-3B as the weak teacher to supervise the strong student Qwen2.5-7B. We report the results with three runs of the experiments. The best and the second-best results are highlighted in \textbf{bold} and \underline{underlined} text, respectively.}
\label{W2SG:qwen}
\end{table*}

\begin{table*}[t]
\centering
\begin{tabular}{ccccccc}
\toprule
\textbf{Setting} & \textbf{Loss} & \textbf{10\%} & \textbf{20\%} & \textbf{30\%} & \textbf{40\%} & \textbf{50\%} \\
\midrule
\multirow{4}{*}{GPT-2-Base $\rightarrow$ GPT-2-Medium} 
& KL     & 90.050  & 86.250  & 81.650   & 72.800    & \underline{53.950}   \\
& RKL    & \textbf{92.400}   & \textbf{91.300}   & \textbf{90.025}  & \underline{80.625}  & 45.375  \\
& CE     & 90.100   & 86.200   & 81.625  & 72.800    & 53.950   \\
& RCE    & \underline{92.025} & \underline{90.800}   & \underline{89.450}   & \textbf{81.825}  & \textbf{59.725}  \\
\cmidrule(lr){1-7}

\multirow{4}{*}{GPT-2-Base $\rightarrow$ GPT-2-Large} 
& KL     & \textbf{94.000}    & 92.325 & 91.275 & 84.750  & 30.550  \\
& RKL    & 93.725 & \textbf{94.150}  & \textbf{92.150}  & \textbf{91.350}  & \textbf{35.175} \\
& CE     & \underline{94.000}    & 92.325 & 91.275 & 84.725 & \underline{30.550}  \\
& RCE    & 93.875 & \underline{93.650}  & \underline{91.900}   & \underline{85.875} & 26.975 \\
\cmidrule(lr){1-7}

\multirow{4}{*}{GPT-2-Medium $\rightarrow$ GPT-2-Large} 
& KL     & \underline{93.800} & 92.850 & 91.730 & 85.930 & \underline{30.000} \\ 
& RKL    & \textbf{94.000} & \underline{93.850} & \underline{92.930} & \textbf{88.730} & 27.400 \\
& CE     & \underline{93.800} & 92.850 & 91.725 & 85.925 & 30.000 \\
& RCE    & 93.675 & \textbf{94.150} & \textbf{93.025} & \underline{88.550} & \textbf{33.775} \\
\bottomrule
\end{tabular}
\caption{Performance comparison across different noise levels and model settings. Bold numbers indicate the best performance for each noise level within each setting. For each experimental setting and noise level, the top-performing result is highlighted in \textbf{bold}, while the second-best is indicated with an \underline{underline}.}
\label{tab:noise_comparison}
\vspace{-10pt}
\end{table*}

\subsection{Experimental Settings}

\paragraph{Datasets.}
We follow previous studies~\citep{burns2023weak,yang2024super} to conduct experiments mainly in the reward modeling task in two settings: enabling a weak model to effectively guide a strong model in achieving either harmlessness or helpfulness. 
To achieve \textbf{harmlessness}, we follow~\citet{yang2024super} to leverage CAI-Harmless~\citep{bai2022constitutional}, a widely adopted benchmark for single-turn harmless dialogue tasks. 
To achieve \textbf{helpfulness}, we utilize HH-RLHF~\citep{bai2022training}, a benchmark designed to guide models toward producing responses that are helpful, informative, and contextually relevant. 
We use a subset of single-turn helpful data of HH-RLHF.

Each dataset includes three subsets: \textbf{(1) Ground truth set}: 4K samples with ground truth labels, used to fine-tune the base models to create strong ceiling models.
\textbf{(2) Weak supervision set}: 4K held-out samples, where the weak model generates predicted labels to guide the training of the strong model.
\textbf{(3) Test set}: The extra 4K samples, reserved for evaluating the generalization performance of all strong ceiling and weak-to-strong models. 
Each sample is formatted as $\Tilde{x}=(x;y_c,y_r)$, where $x$ is the user input, $y_c$ and $y_r$ represent human-chosen and human-rejected responses separately.

\paragraph{Models.}
We conduct experiments on two types of model families: 
(1) GPT-2-series~\citep{radford2019language}, including GPT-2-Base, GPT-2-Medium, GPT-2-Large, and GPT-2-XL; 
(2) Pythia-series~\citep{biderman2023pythia}, specifically, Pythia-70M, Pythia-160M, Pythia-410M, and Pythia-1B. 
Each model is trained to generate a soft value between 0 to 1 for each sample: $$F(\Tilde{x}) = \text{Sigmoid}(F(y_c)-F(y_r)).$$
When implementing forward and reverse losses, the single predicted logit is transformed into a logits distribution represented as $(1 - F(\Tilde{x}), F(\Tilde{x}))$.

\paragraph{Training and Evaluation.}
The strong ceiling models are trained using the standard CE loss. We apply four loss functions in W2SG: forward KL, reverse KL, CE and reverse CE.
To ensure the reliability and consistency of our results, each experiment is repeated across three random seeds.
We set the training batch size to $16$, learning rate to $10^{-5}$, and \texttt{max\_seq\_len} to $512$.
Following the approach of~\citet{burns2023weak}, we train each model for a single epoch to reduce overfitting.
Finally, we report the average accuracy on the test set across the three random seeds for each model for comparison.

\subsection{Main Results} \label{sec:main_results}

The experimental results of the GPT-2 series on the CAI-Harmless and HH-RLHF datasets are presented in~\cref{fig:cai}. 
Due to space limitation, we put the detailed results for the Pythia series in~\cref{exp_result_pythia}, but the similar trends can be observed. 

We can draw several conclusions from the results in~\cref{fig:cai}: 
(1) The accuracy demonstrates a consistent upward trend from left to right. 
It indicates that the generalization capability of the strong model improves when a more capable weak model is employed as the supervisor.
This finding is in line with~\cref{lemma:upper_lower_inf} and aligns with prior research~\citep{burns2023weak,yao2025understanding}, which suggests that utilizing a higher-capacity weak model enhances the strong model's performance.
Furthermore, with a fixed weak model, leveraging a stronger model also yield improved strong model's performance, consistent with established research~\citep{burns2023weak,yang2024super}.
(2) We observe that \textit{reverse KL and reverse CE losses enable strong models to outperform those trained with forward KL and CE losses across most experimental settings}.
In particular, in all settings (12 out of 12), the use of reverse KL yields a stronger model compared to standard KL.
Similarly, reverse CE outperforms or parallels forward CE in nearly all experimental settings (10 out of 12). 
These empirical results, supported by our theoretical framework, underscore the superiority of reverse losses over forward losses.
(3) In the majority of settings (10 out of 12), the strong model surpasses or meets the performance of its weak supervisor when trained with reverse KL or reverse CE loss. This observation supports~\cref{thm:realizable} and~\cref{thm:non-realizable-finite-samples}. 
However, the theoretical guarantees may not always hold in practice, particularly in scenarios involving extremely complex LLMs with limited training set in W2SG, where the underlying assumptions may be violated.

\begin{figure*}[t]
\begin{center}
\vspace{-15pt}
\centerline{\includegraphics[width=0.9\linewidth]{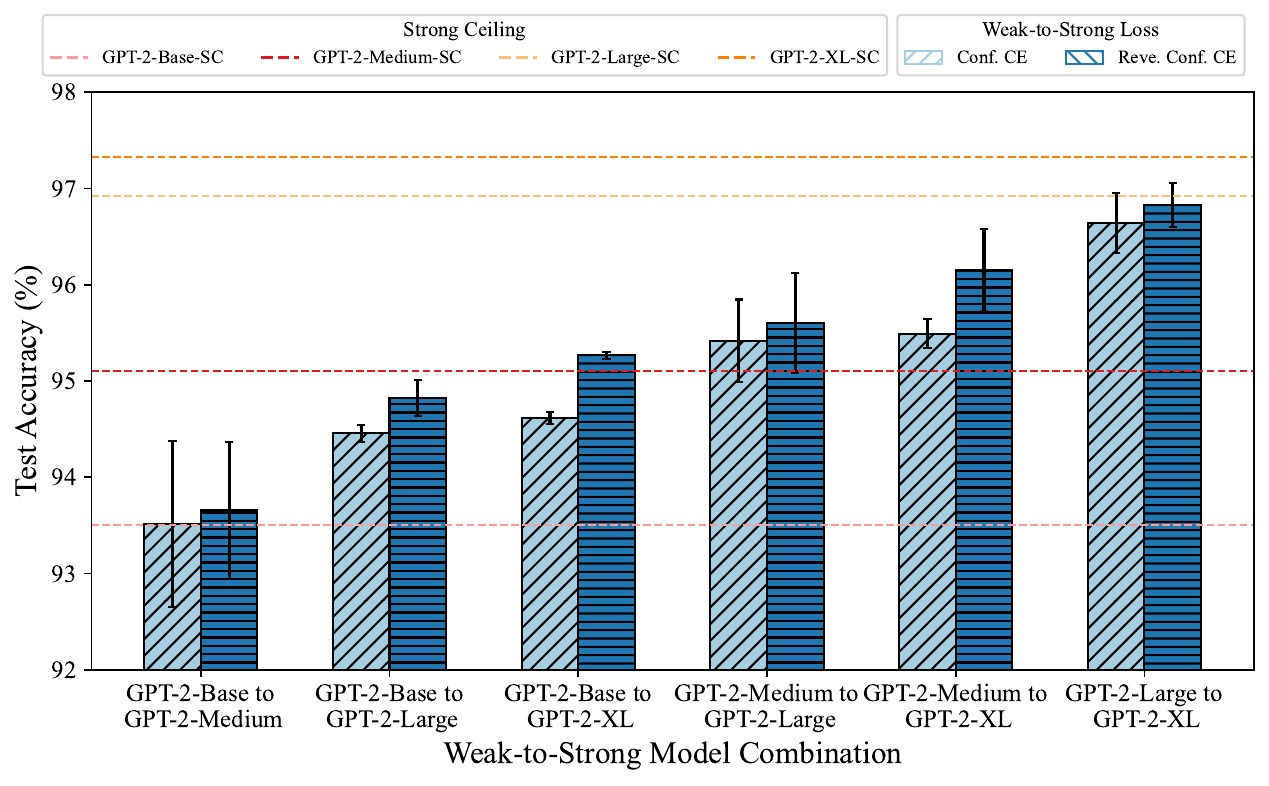}}
\vspace{-5pt}
\caption{Results of GPT-2 series on CAI-Harmless. ``SC'' denotes the strong ceiling model, and ``A to B'' indicates the use of weak teacher ``A'' to supervise strong student ``B''. The terms ``Conf.\ CE'' and ``Reve.\ Conf.\ CE'' refer to the auxiliary confidence loss with vanilla cross-entropy loss (\cref{eq:confidence_loss}) and reverse cross-entropy loss (\cref{eq:reverse_confidence_loss}), respectively. Error bars represent the standard deviation across three runs of the experiment.}
\label{fig:conf_loss}
\end{center}
\vspace{-25pt}
\end{figure*}


\subsection{Ablation Study}

\paragraph{Larger-scale language models.}
We have also conducted additional experiments on Qwen2.5-3B, Qwen2.5-7B and Qwen2.5-14B models~\citep{qwen} using CAI-Harmless dataset in~\cref{W2SG:qwen}.
We can still observe that reverse losses consistently outperform forward losses across these larger-scale models.

\paragraph{Noise tolerance.}
We introduce additional noise into the weak supervision and conduct experiments on CAI-Harmless. 
Specifically, we systematically vary the proportion of training data (ranging from 10\% to 50\%) where we swap the probability assignments between chosen and rejected responses in the weak supervision signals.
When the chosen and rejected labels are systematically swapped across a substantial portion of the weak supervision, the model's optimization process becomes misdirected, causing it to learn inverted preferences. In this case, the model's accuracy may fall below the 50\% (random guessing), as it actively learns to select the incorrect responses.
The results are shown in~\cref{tab:noise_comparison}. We perform each experiment in one run. 
Our results demonstrate that reverse losses outperform forward losses in most experimental settings. However, under extremely high noise levels, reverse KL divergence may exhibit overconfidence in incorrect modes, leading to degraded performance.

\paragraph{Regularization.}
We notice that~\citet{burns2023weak} investigates an improved strategy: incorporating an additional regularization term aimed at boosting the strong model's confidence in its predictions, while utilizing the standard CE loss as the primary objective.
This naturally raises the question of whether combining reverse CE loss with such regularization can further improve the strong model's performance compared to standard CE loss with regularization.
To explore this question, we conduct experiments using the GPT-2 series on the CAI-Harmless dataset as a representative case. 
The key observations are summarized in~\cref{fig:conf_loss}, with further experimental details deferred to~\cref{exp:conf_loss}.
First, by integrating the insights from~\cref{fig:cai} and~\cref{fig:conf_loss}, we can see that incorporating the confidence regularization leads to a modest improvement in the strong model's performance, aligning with the observations of~\citet{burns2023weak}.
Second, the strong model trained using reverse CE loss with regularization consistently surpasses its counterpart trained with standard CE loss. 
This result, together with our previous results in~\cref{sec:main_results}, underscores the clear advantage of reverse losses over forward losses in enhancing model performance in diverse settings.

\section{Conclusion}

In this work, we propose a theoretically principled approach by rethinking the loss function in W2SG.
Unlike the mass-covering nature of forward KL, reverse KL exhibits a mode-seeking behavior that focuses on high-confidence predictions from the weak supervisor, thereby reducing the influence of noisy signals.
Theoretically, we derive both upper and lower bounds for forward and reverse losses and explain how to make these bounds tighter.
Notably, when fine-tuning a pre-trained strong model on its last linear layer, reverse KL theoretically ensures that the strong model outperforms its weak supervisor by the magnitude of their disagreement under some assumptions.
Empirically, we show that reverse losses successfully outperform forward losses in most settings and exhibit better noise toleration, highlighting the practical benefits of reverse KL and CE losses in W2SG.

\newpage

\section*{Limitations}
While our study provides theoretical insights and empirical validation for the advantages of reverse losses in W2SG, several limitations remain. 
First, our analysis mainly assumes relatively reliable weak supervision from pre-trained and fine-tuned models. However, real-world applications often involve noisy weak supervision, and reverse KL’s mode-seeking nature may amplify extreme noise. Further research is needed to assess its suitability in such cases.
Second, while the theoretical results in \cref{sec:universal} provide broad insights, the assumptions in \cref{sec:upper} may not hold in the practical deployment of LLMs. 
This limitation is shared by most related work on theoretical understanding of W2SG. 
Nonetheless, these foundations offer valuable guidance and a starting point for future research on advancing W2SG theory in LLMs.
Third, our experiments are conducted on two well-known alignment-focused binary classification datasets with relatively smaller model sizes. While these results offer valuable insights, it remains an open question whether they can be generalized to more diverse datasets and larger-scale models. Exploring this aspect in future work will help further validate the broader applicability of our approach.

\section*{Ethics Statement}
Our intention is to highlight the positive impact of reverse losses in improving weak-to-strong generalization, ensuring more robust and reliable model performance while minimizing the influence of potentially imperfect weak supervision.
However, the potential amplification of biases from weak models remains a concern, particularly in sensitive applications where fairness is a critical issue. While reverse KL mitigates overfitting to unreliable supervision, its mode-seeking nature may amplify the biases present in the weak model’s predictions.
Additionally, stronger AI models trained using W2SG could be misused if deployed without appropriate safeguards, emphasizing the need for responsible development and oversight.

\section*{Acknowledgements}
We are deeply grateful to Gengze Xu, Abhijeet Mulgund and Chirag Pabbaraju for their invaluable insights and constructive suggestions.
This research was supported by National Natural Science Foundation of China (No.62476277), National Key Research and Development Program of China(NO. 2024YFE0203200), CCF-ALIMAMA TECH Kangaroo Fund(No.CCF-ALIMAMA OF 2024008), and Huawei-Renmin University joint program on Information Retrieval. We also acknowledge the support provided by the fund for building worldclass universities (disciplines) of Renmin University of China and by the funds from Beijing Key Laboratory of Big Data Management and Analysis Methods, Gaoling School of Artificial Intelligence, Renmin University of China, from Engineering Research Center of Next-Generation Intelligent Search and Recommendation, Ministry of Education, from Intelligent Social Governance Interdisciplinary Platform, Major Innovation \& Planning Interdisciplinary Platform for the “DoubleFirst Class” Initiative, Renmin University of China, from Public Policy and Decision-making Research Lab of Renmin University of China, and from Public Computing Cloud, Renmin University of China.


\bibliography{custom}

\newpage
\onecolumn
\appendix

\clearpage
\tableofcontents
\newpage

{\LARGE \centering \textbf{Appendix} \par}
\section{Discussion of Concurrent Work} \label{discussion:concurrent}

Recently, a concurrent work~\citep{mulgund2025relating} has also independently addressed a problem similar to ours. 
The primary similarity is found in~\cref{sec:upper}.
In particular, our~\cref{thm:realizable} in~\cref{sec:upper} is similar to their Theorem 4.1 \& Corollary 4.2.
The proof of their Theorem 4.1 \& Corollary 4.2 and our~\cref{thm:realizable} share the convexity assumption and the mathematical formulation of Bregman divergence. However, the proof techniques differ significantly. While we explicitly construct the sum of first-order and second-order terms through derivation and calculation, they apply mathematical tools such as generalized Pythagorean inequality, convex analysis, and the sequential consistency property to derive their results.
Building on the derived results, while we focus on overcoming the realizability assumption and deriving sample complexity bounds in our~\cref{thm:non-realizable-finite-samples}, they aim to relax the convexity assumption by projecting the weak model onto convex combinations of functions based on the strong model representation in their Theorem 4.3.
Both our work and~\citet{mulgund2025relating} contribute to the theoretical understanding of W2SG in classification, with a particular focus on reverse KL/CE losses.

\section{Main Proof} 

\subsection{Proof of~\cref{lemma:upper_lower_inf}} \label{proof_lemma_inf}

We first define the corresponding probability distributions for prediction of $F_{sw}$ and $F_w$.
$\forall x \in \cX$, we know that $\sum_{j=1}^k [F_w(x)]_j = 1$. Therefore, given the class space $C_k = \{ 1, \cdots, k \}$, we define a probability distribution $\cP_{w}(x)$ with the probability density function $p_w$, where $j \in C_k$ and 
\begin{align} \label{def_new_distribution}
    p_w(j)=[F_w(x)]_j.
\end{align}
Using this method, we also define the probability distribution $\cP_{sw}(x)$ for $F_{w}(x)$.

\begin{lemma}[\citet{yao2025understanding}] \label{infor_lemma}
Given the probability distributions $\cP_{w}(x)$ and $\cP_{sw}(x)$ above.
For any $x \in \cX$, $j \in C_k$, $g: C_k \to \R$ and assume that $g$ is $\sigma$-subgaussian
\footnote{A random variable $X \in \R$ is $\sigma$-subgaussian if for any $\rho$, $\log \mathbb{E} \exp (\rho(X-\mathbb{E} X)) \leq \rho^2 \sigma^2 / 2$.}.
Let $f=t \cdot g$ for any $t \in \R$, then 
$$\mathrm{D}_{\mathrm{KL}}\left( F_w(x) \| F_{sw}(x) \right) \ge \sup _t t\left(\mathbb{E}_{j^{\prime} \sim \cP_w(x)}\left[g\left(j^{\prime}\right)\right]-\mathbb{E}_{j \sim \cP_{sw}(x)}[g(j)]\right)-t^2 \sigma^2 / 2.$$
\end{lemma}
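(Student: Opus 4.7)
The plan is to invoke the Donsker--Varadhan variational formula (stated as \cref{lemma:donsker}) with the two discrete measures $\cP_w(x)$ and $\cP_{sw}(x)$ that were constructed in \cref{def_new_distribution}, and then control the log-moment-generating term using the subgaussianity hypothesis on $g$. Since these are discrete distributions on $C_k$, the expectations in Donsker--Varadhan reduce to finite sums weighted by $[F_w(x)]_j$ and $[F_{sw}(x)]_j$, and the KL divergence $\mathrm{D}_{\mathrm{KL}}(\cP_w(x)\|\cP_{sw}(x))$ coincides with $\mathrm{D}_{\mathrm{KL}}(F_w(x)\|F_{sw}(x))$ by construction.

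First I would apply \cref{lemma:donsker} and immediately restrict the supremum over all bounded measurable $f:C_k\to\mathbb{R}$ to the single candidate $f=t\cdot g$, for each fixed $t\in\mathbb{R}$. This yields the one-sided inequality
\begin{equation*}
\mathrm{D}_{\mathrm{KL}}\!\left(F_w(x)\,\|\,F_{sw}(x)\right) \;\geq\; t\,\mathbb{E}_{j'\sim\cP_w(x)}[g(j')] \;-\; \log \mathbb{E}_{j\sim\cP_{sw}(x)}\!\left[\exp\bigl(t\,g(j)\bigr)\right].
\end{equation*}

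Next I would handle the log-MGF term by peeling off the mean: write $\exp(tg(j)) = \exp\!\bigl(t(g(j)-\mu)\bigr)\cdot\exp(t\mu)$ with $\mu=\mathbb{E}_{j\sim\cP_{sw}(x)}[g(j)]$, so the log-MGF decomposes as $t\mu$ plus the centered log-MGF. The $\sigma$-subgaussianity of $g$ (with respect to $\cP_{sw}(x)$, which is how I would read the hypothesis) gives the standard bound $\log\mathbb{E}_{j\sim\cP_{sw}(x)}\!\left[\exp\!\bigl(t(g(j)-\mu)\bigr)\right] \leq t^{2}\sigma^{2}/2$. Substituting this back and taking the supremum over $t\in\mathbb{R}$ produces exactly the stated inequality.

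The only subtle point, and the most likely source of obstacle, is justifying that Donsker--Varadhan applies with $f=tg$: strictly speaking \cref{lemma:donsker} is stated for bounded measurable $f$, whereas $g$ is only assumed subgaussian. Because $C_k=\{1,\dots,k\}$ is finite, however, every function $g:C_k\to\mathbb{R}$ is automatically bounded, so $tg$ is bounded and measurable for each fixed $t$, and the step is legal without any truncation argument. Aside from this bookkeeping, the argument is a direct chaining of two classical inequalities and requires no further calculation.
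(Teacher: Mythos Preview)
Your argument is correct and is exactly the standard derivation: Donsker--Varadhan restricted to $f=tg$, then the subgaussian bound on the centered log-MGF, then a supremum over $t$. The paper does not actually prove \cref{infor_lemma}; it is quoted as a result from \citet{yao2025understanding} and used as a black box in the proof of \cref{lemma:upper_lower_inf}. That said, the paper lists \cref{lemma:donsker} and the subgaussian definition precisely as the preliminaries for this section, so your reconstruction is clearly the intended one, and your observation that finiteness of $C_k$ makes $tg$ automatically bounded cleanly resolves the only technical wrinkle.
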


Now we start the proof. 
The derivations are mainly adapted from~\citet{yao2025understanding} by swapping $F_w$ and $F_{sw}$ and using the connection between cross-entropy and KL divergence. 
To make this paper self-contained, we incorporate the proof below.

\begin{proof}

By taking expectations of $x$ on both sides of the inequality in~\cref{infor_lemma}, we obtain
\begin{multline*}
    \kl(F_w, F_{sw}) = \bE_x \mathrm{D}_{\mathrm{KL}}\left( F_w(x) \| F_{sw}(x) \right) \\ \ge \sup _t \underbrace{t\left(\bE_x\mathbb{E}_{j^{\prime} \sim \cP_w(x)}\left[g\left(j^{\prime}\right)\right]-\bE_x\mathbb{E}_{j \sim \cP_{sw}(x)}[g(j)]\right)-t^2 \sigma^2 / 2}_{\phi(t)}.
\end{multline*}

Note that $\phi(t)$ is a quadratic function of $t$.
Therefore, by AM–GM inequality, we find the maximum of this quadratic function:
\begin{align*}
    \phi(t) \le \frac{1}{2\sigma^2}\left(\bE_x\mathbb{E}_{j^{\prime} \sim \cP_w(x)}\left[g\left(j^{\prime}\right)\right]-\bE_x\mathbb{E}_{j \sim \cP_{sw}(x)}[g(j)]\right)^2 = \sup _t \phi(t) \le \kl(F_w, F_{sw}).
\end{align*}

Subsequently, there holds
\begin{align} \label{ineq:lower_upper_kl_loss}
\left|\bE_x\mathbb{E}_{j^{\prime} \sim \cP_w(x)}\left[g\left(j^{\prime}\right)\right]-\bE_x\mathbb{E}_{j \sim \cP_{sw}(x)}[g(j)]\right| \le \sqrt{2\sigma^2 \kl(F_w, F_{sw})}.
\end{align}

Likewise, according to~\cref{infor_lemma}, we have
\begin{align} \label{proof:variant-1}
    \mathrm{D}_{\mathrm{KL}}\left( F_{sw}(x) \| F_w(x) \right) \ge \sup _t t\left(\mathbb{E}_{j \sim \cP_{sw}(x)}\left[g\left(j^{\prime}\right)\right]-\mathbb{E}_{j^{\prime} \sim \cP_w(x)}[g(j)]\right)-t^2 \sigma^2 / 2.
\end{align}

We apply the same proof technique to~\eqref{proof:variant-1} and obtain:
\begin{align} \label{proof:variant-2}
    \left|\bE_x\mathbb{E}_{j^{\prime} \sim \cP_w(x)}\left[g\left(j^{\prime}\right)\right]-\bE_x\mathbb{E}_{j \sim \cP_{sw}(x)}[g(j)]\right| \le \sqrt{2\sigma^2 \kl(F_{sw}, F_w)}.
\end{align}

Now we construct $g$ to associate the above results with $\dist(F^\star, F_{sw})$ and $\dist\left( F^\star, F_w  \right)$.
Specifically, given a probability distribution $\cP_g$ with the density function $p_g$, we define function $g: C_k \to (0,1]$ associated with $\cP_g$: 
$$g(j) \triangleq \frac{[F^\star(x)]_j}{p_g(j)} \log \frac{[F^\star(x)]_j}{p_g(j)}, \quad \text{for} \ j \in C_k.$$

We have
\begin{align*}
    \bE_x\mathbb{E}_{j \sim \cP_g} \left[g(j)\right] & = \bE_x \bE_{j \sim \cP_g} \left[\frac{[F^\star(x)]_j}{p_g(j)} \log \frac{[F^\star(x)]_j}{p_g(j)} \right] \\
    & = \bE_x \left[\sum_{j \in C_k} p_g(j) \cdot \frac{[F^\star(x)]_j}{p_g(j)} \cdot \log \frac{[F^\star(x)]_j}{p_g(j)} \right] \\
    & = \bE_x \left[\sum_{j \in C_k} [F^\star(x)]_j \cdot \log \frac{[F^\star(x)]_j}{p_g(j)} \right]
\end{align*}

Recall the definition of $\cP_{sw}$ and $\cP_w$ in~\eqref{def_new_distribution}, we replace $\cP_g$ with $\cP_{sw}$ and $\cP_w$ in the above equation:
\begin{align*}
    & \bE_x\mathbb{E}_{j^{\prime} \sim \cP_{sw}}\left[g\left(j^{\prime}\right)\right] = \bE_x \left[ \sum_{j=1} [F^\star(x)]_j \log \frac{[F^\star(x)]_j}{[F_{sw}(x)]_j} \right] = \kl(F^\star, F_{sw}), \\
    & \bE_x\mathbb{E}_{j \sim \cP_w}[g(j)] = \bE_x \left[ \sum_{j=1} [F^\star(x)]_j \log \frac{[F^\star(x)]_j}{[F_{w}(x)]_j} \right] = \kl(F^\star, F_{w}).
\end{align*}

Substitute the above into~\eqref{ineq:lower_upper_kl_loss}:
\begin{align} \label{ineq:temp-1}
    \left| \dist(F^\star, F_{sw})-\dist(F^\star, F_{w}) \right| \le \sqrt{2\sigma^2 \kl(F_w, F_{sw})},
\end{align}
The above inequality is because whether $\dist$ is $\kl$ or $\cross$, we have 
$$\dist(F^\star, F_{sw})-\dist(F^\star, F_{w}) = \kl(F^\star, F_{sw})-\kl(F^\star, F_{w}).$$

Likewise, we apply the same proof technique to~\eqref{proof:variant-2} and obtain:
\begin{align} \label{proof:variant-3}
    \left| \dist(F^\star, F_{sw})-\dist(F^\star, F_{w}) \right| \le \sqrt{2\sigma^2 \kl(F_{sw}, F_w)}.
\end{align}

Finally, we obtain the subgaussian factor $\sigma$ of function $g$ by using the fact that $g$ is bounded.
Recall that the output domain $\cY \subseteq \R^k$, where $\forall y = (y_1, \cdots, y_k)^T \in \cY$, there holds $\sum_{i=1}^k y_i=1$ and $0 < y_i \le 1$.
In other words, $\exists \gamma>0$ such that $0 < \gamma \le y_i \le 1$.
It means that $g(j) \in [-\frac{1}{\gamma} \log \frac{1}{\gamma}, \frac{1}{\gamma} \log \frac{1}{\gamma}]$.
According to Hoeffding’s lemma, $\forall \lambda \in \R$, we have
$$\mathbb{E}\left[e^{\lambda(g(j)-\mathbb{E}[g(j)])}\right] \leq \exp \left(\frac{\lambda^2 \left(\frac{1}{\gamma} \log \frac{1}{\gamma} \right)^2}{2}\right).$$
In other words, $g(j)$ is $\sigma$-subgaussian, where $\sigma=\frac{1}{\gamma} \log \frac{1}{\gamma}$.
Substitute it into~\eqref{ineq:temp-1} and~\eqref{proof:variant-3}, we prove the final results:
\begin{align*}
    & \left| \dist(F^\star, F_{sw}) - \dist\left( F^\star, F_w  \right) \right| \le C_1 \sqrt{\kl(F_w, F_{sw})}, \\
    & \left| \dist(F^\star, F_{sw}) - \dist\left( F^\star, F_w  \right) \right| \le C_1 \sqrt{\kl(F_{sw}, F_w)},
\end{align*}
where the constant $C_1 = \frac{\sqrt{2}}{\gamma} \log \frac{1}{\gamma}$.

\end{proof}

\subsection{Proof of~\cref{theorem:residue}} \label{constant:theorem}

Total variation distance is introduced for our proof.
\begin{definition}[Total Variation Distance] \label{def:tv_distance}
Given two probability distributions $P$ and $Q$, the Total Variation (TV) distance between $P$ and $Q$ is
$$\tv(P \| Q)= \frac{1}{2} \int_{x \in \mathcal{X}} \left| P(x)-Q(x) \right| d x.$$
\end{definition}
Note that $\tv(P \| Q)\in[0,1]$. Also, $\tv(P \| Q)=0$ if and only if $P$ and $Q$ coincides, and $\tv(P \| Q)=1$ if and only if $P$ and $Q$ are disjoint.

\begin{proof}
We have
\begin{align}
    \dist(F^\star, F_w) &= \bE_x \left[ \sum_{i=1}^k [F^\star(x)]_i \log \frac{[F^\star(x)]_i}{[F_w(x)]_i} \right] \nonumber \\
    &= \bE_x \left[ \sum_{i=1}^k [F^\star(x)]_i \log \left( \frac{[F^\star(x)]_i}{[F_{sw}(x)]_i} \cdot \frac{[F_{sw}(x)]_i}{[F_w(x)]_i} \right) \right] \nonumber \\
    &= \bE_x \left[ \sum_{i=1}^k [F^\star(x)]_i \log  \frac{[F^\star(x)]_i}{[F_{sw}(x)]_i} \right] + \bE_x \left[ \sum_{i=1}^k [F^\star(x)]_i \log  \frac{[F_{sw}(x)]_i}{[F_w(x)]_i} \right] \nonumber \\
    & = \dist(F^\star, F_{sw}) + \left \langle F^\star, \log{\frac{F_{sw}}{F_w}} \right \rangle_E.
\end{align}
Rearranging terms and we know that:
\begin{align} \label{eqn:temp_b3}
    \dist(F^\star, F_{sw}) = \dist(F^\star, F_w) - \left \langle F^\star, \log{\frac{F_{sw}}{F_w}} \right \rangle_E.
\end{align}

Recall that the output domain $\cY \subseteq \R^k$, where $\forall y = (y_1, \cdots, y_k)^T \in \cY$, there holds $\sum_{i=1}^k y_i=1$ and $0 < y_i \le 1$.
In other words, $\exists \gamma>0$ such that $0 < \gamma \le y_i \le 1$.
Firstly, we know that $F^\star(x)$ is element-wise non-negative.
Denote $\vec{1}=(1,1, \cdots, 1)^T$. We know that there is a positive constant $\frac{1}{\gamma} \ge \left(\min_i [F_w(x)]_i \right)^{-1}$.
We use element-wise addition, subtraction, multiplication, division and absolute value in the proof.
Note that 
\begin{align*}
    \left \langle F^\star, \log{\frac{F_{sw}}{F_w}} \right \rangle_E & \le \left \langle F^\star, \frac{F_{sw}}{F_w} -\vec{1} \right \rangle_E \tag{$\log x \le x-1$} \\
    & \le \left \langle F^\star, \frac{1}{\gamma} \cdot F_w \left\vert \frac{F_{sw}}{F_w} -\vec{1} \right\vert \right \rangle_E \tag{$\frac{1}{\gamma} \cdot F_w \ge \vec{1}$ (element-wise)} \\
    & = \frac{1}{\gamma} \cdot \left \langle F^\star, \left\vert F_{sw}-F_w \right\vert  \right \rangle_E,
\end{align*}

and
\begin{align*}
    \left \langle F^\star, \left\vert F_{sw}-F_w \right\vert \right \rangle_E & = \expect_x \left[ \left(F^\star(x)\right)^T \left( \left|F_{sw}(x)-F_w(x)\right| \right) \right] \\
    & \le \expect_x \left[ \left\| F^\star(x) \right\|_\infty \cdot \left\| F_{sw}(x)-F_w(x) \right\|_1 \right] \tag{Holder’s inequality for vector-valued functions} \\ 
    & \le \expect_x \left[ \left\| F_{sw}(x)-F_w(x) \right\|_1 \right] \tag{$[F^\star(x)]_i \le 1$} \\ 
    & = 2 \expect_x \tv \left( F_w(x), F_{sw}(x) \right) \tag{Definition of TV distance} \\
    & \le 2 \sqrt{\expect_x \tv^2 \left( F_w(x), F_{sw}(x) \right)} \tag{Jensen’s inequality} \\ 
    & \le 2 \sqrt{\frac{1}{2}\expect_x \mathrm{D}_{\mathrm{KL}} \left( F_w(x), F_{sw}(x) \right)} \tag{Pinsker’s inequality} \\ 
    & = \sqrt{2\kl(F_w, F_{sw})}. \tag{Definition of $\kl(\cdot, \cdot)$}
\end{align*}

Therefore, 
$$\left \langle F^\star, \log{\frac{F_{sw}}{F_w}} \right \rangle_E \le \frac{\sqrt{2}}{\gamma} \cdot \sqrt{\kl(F_w, F_{sw})}.$$
Since the TV distance is symmetric, we also have
$$\left \langle F^\star, \log{\frac{F_{sw}}{F_w}} \right \rangle_E \le \frac{\sqrt{2}}{\gamma} \cdot \sqrt{\kl(F_{sw}, F_w)}.$$
Substitute them into~\cref{eqn:temp_b3} and we can prove that:
\begin{align*}
    & \dist(F^\star, F_{sw}) \ge \dist\left( F^\star, F_w \right) - \underbrace{\frac{\sqrt{2}}{\gamma}}_{C_2} \sqrt{\kl(F_w, F_{sw})}, \\
    & \dist(F^\star, F_{sw}) \ge \dist\left( F^\star, F_w \right) - \underbrace{\frac{\sqrt{2}}{\gamma}}_{C_2} \sqrt{\kl(F_{sw}, F_w)}.
\end{align*}

The above inequalities also applies to $\dist(\cdot, \cdot)=\cross(\cdot, \cdot)$ because whether $\dist$ is $\kl$ or $\cross$, we have 
$$\dist(F^\star, F_{sw})-\kl(F^\star, F_{sw}) = \dist(F^\star, F_{w})-\kl(F^\star, F_{w}).$$

\end{proof}

\paragraph{Discussion of the constant.}
Recall that $C_1 = \frac{\sqrt{2}}{\gamma} \log \frac{1}{\gamma}$ and $C_2 = \frac{\sqrt{2}}{\gamma}$. 
In other words, $\gamma < \frac{1}{e}$ leads to $C_2 \le C_1$.
While $\gamma$ is the minimal value of the output, it is generally very small ($\gamma=10^{-3} \ll \frac{1}{e}$ in our experiments), i.e., $C_2 \le C_1$.
Therefore, the lower bound in~\cref{theorem:residue} is tighter than that in~\cref{lemma:upper_lower_inf}.

\paragraph{Further Discussion.}
We show that adding an additional assumption leads to $\dist(F^\star, F_{sw}) \ge \dist\left( F^\star, F_w \right) - \dist(F_w, F_{sw})$.
Particularly, if $\dist(F_w, F_{sw})$ can be improved to some extent, the constant $C$ and square root in~\cref{theorem:residue} can be eliminated, contributing to a more elegant version:
\begin{corollary}\label{corollary:residue_large_dp}
    Let $\dist$ to be $\kl$ or $\cross$.
    Let $R \ge 0$ and consider the same constant $C$ in~\cref{theorem:residue}.
    If $\dist(F_w, F_{sw}) \ge \sqrt{2}C$ is satisfied, then: 
    \begin{align*}
        \dist(F^\star, F_{sw}) \ge \dist\left( F^\star, F_w \right) - \kl(F_w, F_{sw}).
    \end{align*}
\end{corollary}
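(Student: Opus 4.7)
The plan is to combine the exact identity from Proposition 1 with the quantitative bound on the residue extracted from the proof of Theorem 1, then use the hypothesis to trade a square-root penalty for a linear one. From Proposition 1 we have $\dist(F^\star, F_{sw}) = \dist(F^\star, F_w) - R$ with $R = \langle F^\star, \log(F_{sw}/F_w) \rangle_E$, so under the standing assumption $R \ge 0$ the target inequality $\dist(F^\star, F_{sw}) \ge \dist(F^\star, F_w) - \kl(F_w, F_{sw})$ reduces to the single scalar inequality $R \le \kl(F_w, F_{sw})$.

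To close that reduction, I would revisit the chain of inequalities inside the proof of Theorem 1 (in~\cref{constant:theorem}), which in fact produces the sharper intermediate bound $R \le C\sqrt{\kl(F_w, F_{sw})}$ via $\log x \le x-1$, H\"older's inequality, Jensen's inequality, and Pinsker's inequality. Thus it suffices to verify $C\sqrt{\kl(F_w, F_{sw})} \le \kl(F_w, F_{sw})$, which is equivalent to $\kl(F_w, F_{sw}) \ge C^{2}$. Under the hypothesis $\dist(F_w, F_{sw}) \ge \sqrt{2}\,C$, a brief AM--GM step of the form $C\sqrt{x} \le \tfrac{1}{2}(C^{2}+x)$ absorbs the $\sqrt{2}$ factor and delivers exactly this threshold in the $\dist = \kl$ case. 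Chaining the estimates yields
\[
\dist(F^\star, F_{sw}) \;=\; \dist(F^\star, F_w) - R \;\ge\; \dist(F^\star, F_w) - C\sqrt{\kl(F_w, F_{sw})} \;\ge\; \dist(F^\star, F_w) - \kl(F_w, F_{sw}),
\]
which is the claim.

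The main obstacle I expect is making the $\dist = \cross$ case go through uniformly. Note first that both sides of the target inequality are entropy-invariant: the $H(F^\star)$ contribution cancels between $\cross(F^\star, F_{sw})$ and $\cross(F^\star, F_w)$, so the target is equivalent to the pure-KL statement $\kl(F^\star, F_{sw}) \ge \kl(F^\star, F_w) - \kl(F_w, F_{sw})$. The subtlety is only on the hypothesis side, where $\cross(F_w, F_{sw}) = \kl(F_w, F_{sw}) + H(F_w)$ so that $\dist(F_w, F_{sw}) \ge \sqrt{2}C$ does not immediately lower-bound $\kl(F_w, F_{sw})$ by $C^{2}$. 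The cleanest resolution is to bound $H(F_w)$ uniformly using $y_i \ge \gamma$ (giving $H(F_w) \le \log k$) and verify that the $\sqrt{2}$ slack in the threshold is large enough to swallow this fixed additive gap, or alternatively to split the proof into the two cases $\dist = \kl$ and $\dist = \cross$ and carry out the AM--GM reduction separately, with the constant $C$ in the CE case inflated just enough to account for the entropy term.
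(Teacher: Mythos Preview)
Your reduction to showing $R \le \kl(F_w, F_{sw})$ is correct, but the route via the Pinsker-based bound $R \le C\sqrt{\kl(F_w, F_{sw})}$ from the proof of \cref{theorem:residue} does not close under the stated hypothesis. Concretely: to pass from $C\sqrt{x}$ to $x$ you need $x \ge C^2$, whereas the hypothesis only gives $x \ge \sqrt{2}\,C$. With $C = C_2 = \sqrt{2}/\gamma$ this means you would need $\kl(F_w,F_{sw}) \ge 2/\gamma^2$ but only have $\kl(F_w,F_{sw}) \ge 2/\gamma$; since $\gamma \in (0,1)$ (typically $\gamma \ll 1$), the latter is strictly weaker. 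The AM--GM step $C\sqrt{x} \le \tfrac{1}{2}(C^2+x)$ is a \emph{weakening} of the left side, not a strengthening, and to further bound $\tfrac{1}{2}(C^2+x)$ by $x$ one still needs exactly $x \ge C^2$ --- so no ``$\sqrt{2}$ factor'' is absorbed anywhere.

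The fix, and what the paper actually does, is to abandon Pinsker at this stage and exploit that total variation is \emph{bounded}. The chain in the proof of \cref{theorem:residue} already yields $|R| \le (2/\gamma)\,\expect_x \tv(F_w(x),F_{sw}(x))$. The paper then applies Bretagnolle--Huber, $\tv \le \sqrt{1-e^{-\mathrm{D}_{\mathrm{KL}}}}$, and shows via a calculus argument on $u(t)=e^{-t}+\tfrac{\gamma^2}{4}t^2-1$ that $\sqrt{1-e^{-t}} \le \tfrac{\gamma}{2}\,t$ whenever $t \ge 2/\gamma$; this gives $|R| \le \kl(F_w,F_{sw})$ precisely under $\kl(F_w,F_{sw}) \ge 2/\gamma = \sqrt{2}\,C$. (A quicker variant: the trivial bound $\tv \le 1$ already gives $|R| \le 2/\gamma$, and the hypothesis finishes immediately.) Either way, the essential point is that Pinsker's $\sqrt{\kl/2}$ grows without bound and is the wrong tool here; you need a TV estimate that \emph{saturates} for large KL. Your discussion of the $\dist=\cross$ case is moot until this core step is repaired; once it is, note that the paper's own proof also only explicitly verifies the threshold in terms of $\kl(F_w,F_{sw})$.
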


\cref{corollary:residue_large_dp} removes the constant coefficient and square root from~\cref{theorem:residue}.
Notably, if $R \ge 0$, the results above reinforce that the key bottleneck for performance improvement over $F_w$ arises from the optimization objective's inherent nature~\citep{yao2025understanding}: 
If $\dist(F_w, F_{sw})$ can be large, the performance improvement cannot exceed $\dist(F_w, F_{sw})$, which is exactly the minimum of~\cref{eqn:fsw-population-minimizer}.

\begin{proof}
We adopt an alternative proof technique in the proof of~\cref{theorem:residue}:
\begin{align*}
    \left| \left \langle F^\star, \left\vert F_{sw}-F_w \right\vert \right \rangle_E \right| & \le 2 \expect_x \tv \left( F_w(x), F_{sw}(x) \right) \tag{The derivations in~\cref{constant:theorem}} \\
    & \le 2 \expect_x \sqrt{1- \exp{\left[-\mathrm{D}_{\mathrm{KL}} \left( F_w(x), F_{sw}(x) \right)\right]}} \tag{Bretagnolle–Huber inequality} \\ 
    & \le 2 \sqrt{1- \exp{\left[-\expect_x \mathrm{D}_{\mathrm{KL}} \left( F_w(x), F_{sw}(x) \right)\right]}} \tag{Jensen’s inequality} \\ 
    & = 2 \sqrt{1- \exp{\left(-\kl(F_w, F_{sw})\right)}}. \tag{Definition of $\kl$}
\end{align*}

Let $u(t)=e^{-t} + \frac{\gamma^2}{4}t^2 - 1, t \ge 0$.
Taking the first-order and second-order derivative:
$u'(t)=-e^{-t} + \frac{\gamma^2}{2}t$, and $u''(t)=e^{-t} + \frac{\gamma^2}{2} > 0$.
While $u'(0)=-1<0$, $u'(\frac{2}{\gamma^2}) >0$, we know that there only exists a $t_0 \in (0,\frac{2}{\gamma^2})$ such that $u'(t_0)=0$.
And $u(t)$ decreases at $[0,t_0]$, increases at $(t_0, +\infty)$ and $u(0)=0$.
Denote $u(t^\star)=0$.
Notice that $u(\frac{2}{\gamma})=e^{-\frac{2}{\gamma}} > 0$, which means that $t^\star < \frac{2}{\gamma}$.
In other words, $t > \frac{2}{\gamma}$ leads to $u(t)>0$, i.e., $\sqrt{1-e^{-t}} \le \frac{\gamma}{2}t$.

Using the above results, if $\left \langle F^\star, \log{\frac{F_{sw}}{F_w}} \right \rangle_E \ge 0$ and $\kl(F_w, F_{sw}) \ge \frac{2}{\gamma}$, then 
\begin{align*}
    \left| \left \langle F^\star, \log{\frac{F_{sw}}{F_w}} \right \rangle_E \right| & \le \left| \frac{1}{\gamma} \cdot \left \langle F^\star, \left\vert F_{sw}-F_w \right\vert  \right \rangle_E \right| \tag{The derivations in~\cref{constant:theorem}} \\ 
    & \le \frac{2}{\gamma} \sqrt{1- \exp{\left(-\kl(F_w, F_{sw})\right)}} \\
    & \le \frac{2}{\gamma} \cdot \frac{\gamma}{2} \kl(F_w, F_{sw}) \\
    & = \kl(F_w, F_{sw}).
\end{align*}

The proof is complete.

\end{proof}

\subsection{Proof of~\cref{prop:general_equation}} \label{proof:general_equation}

\begin{proof}
We have
\begin{align*}
    \dist(F^\star, F_w) &= \bE_x \left[ \sum_{i=1}^k [F^\star(x)]_i \log \frac{[F^\star(x)]_i}{[F_w(x)]_i} \right] \\
    &= \bE_x \left[ \sum_{i=1}^k [F^\star(x)]_i \log \left( \frac{[F^\star(x)]_i}{[F_{sw}(x)]_i} \cdot \frac{[F_{sw}(x)]_i}{[F_w(x)]_i} \right) \right] \\
    &= \bE_x \left[ \sum_{i=1}^k [F^\star(x)]_i \log  \frac{[F^\star(x)]_i}{[F_{sw}(x)]_i} \right] + \bE_x \left[ \sum_{i=1}^k [F^\star(x)]_i \log  \frac{[F_{sw}(x)]_i}{[F_w(x)]_i} \right] \\
    &=\dist(F^\star, F_{sw}) + \left \langle F^\star, \log{\frac{F_{sw}}{F_w}} \right \rangle_E.
\end{align*}
Rearranging terms and we can prove the result.

The above also applies to $\dist(\cdot, \cdot)=\cross(\cdot, \cdot)$ because whether $\dist$ is $\kl$ or $\cross$, we have 
$$\dist(F^\star, F_{sw})-\kl(F^\star, F_{sw}) = \dist(F^\star, F_{w})-\kl(F^\star, F_{w}).$$
\end{proof}

\paragraph{Insights for reverse KL loss.}
Using similar decomposition technique, we obtain
\begin{align*}
    \dist(F_w, F^\star) = \dist(F_{sw}, F^\star) + \underbrace{\left \langle F_w-F_{sw}, \log{\frac{F_w}{F^\star}} \right \rangle_E - \dist(F_{sw}, F_w)}_{R_1}.
\end{align*}

Therefore, $\dist(F_{sw}, F^\star) \le \dist\left( F_w, F^\star \right)$ satisfies \textit{if and only if} $R_1 \ge 0$.
While the teacher-student disagreement is minimized in W2SG, we expect a small value of $\dist(F_{sw}, F_w)$.
Therefore, we want to obtain a large $\left \langle F_w-F_{sw}, \log{\frac{F_w}{F^\star}} \right \rangle_E$.
Intuitively, for any $x \in \cX$ and $i \in \{ 1, \cdots, k  \}$, we expect the model predictions to satisfy either of the two inequalities:
\begin{align}
    & [F_w(x)]_i \ge \max([F_{sw}(x)]_i ,[F^\star(x)]_i), \label{prop_2_ineq_1-r} \\ 
    & [F_w(x)]_i \le \min([F_{sw}(x)]_i ,[F^\star(x)]_i). \label{prop_2_ineq_2-r}
\end{align}
In other words, the predicted probabilities of $F_{sw}$ reflect the true outcome better than $F_w$.
The confidence level of $F_{sw}$ should be better aligned with $F^\star$ than that of $F_w$.

\paragraph{Insights for squared loss.} 
\citet{charikar2024quantifying} consider the squared loss:
$$\dist(f, g)=\mathbb{E}_{x \sim \mathcal{P}}(f(x)-g(x))^2.$$

In this setting, $\dist(f, g)=\dist(g, f)$ and we have
\begin{align*}
\dist\left(F_w, F^\star\right) & = \mathbb{E}_{x \sim \mathcal{P}}\left(F^\star(x)-F_w(x)\right)^2 \\
& = \mathbb{E}_{x \sim \mathcal{P}}\left(F^\star(x)-F_{sw}(x)+F_{sw}(x)-F_w(x)\right)^2 \\
& = \mathbb{E}_{x \sim \mathcal{P}}\left(F^\star(x)-F_{sw}(x)\right)^2+\mathbb{E}_{x \sim \mathcal{P}}\left(F_{sw}(x)-F_w(x)\right)^2 \\ & \hspace{2cm} +2 \cdot \mathbb{E}_{x \sim \mathcal{P}}\left[\left(F^\star(x)-F_{sw}(x)\right)\left(F_{sw}(x)-F_w(x)\right)\right] \\
& = \dist\left(F_{s w}, F^\star\right)+\dist\left(F_{s w}, F_w\right) +2 \cdot \mathbb{E}_{x \sim \mathcal{P}}\left[\left(F^\star(x)-F_{sw}(x)\right)\left(F_{sw}(x)-F_w(x)\right)\right].
\end{align*}

If we define
$$\left \langle f,g \right \rangle_S = 2 \cdot \mathbb{E}_{x \sim \mathcal{P}} \left[ f(x) \cdot g(x) \right],$$
then we have
\begin{align*}
    \dist\left(F_w, F^\star\right) = \dist\left(F_{s w}, F^\star\right)+\dist\left(F_{s w}, F_w\right) + \left \langle F^\star - F_{sw}, F_{sw}-F_w \right \rangle_S.
\end{align*}

Rearranging terms and we have
\begin{align} \label{suff_necc_condition_squared}
    \dist\left(F_{s w}, F^\star\right) = \dist\left(F_w, F^\star\right)-\dist\left(F_{s w}, F_w\right) - \left \langle F^\star - F_{sw}, F_{sw}-F_w \right \rangle_S.
\end{align}

Therefore, $\left \langle F^\star - F_{sw}, F_{sw}-F_w \right \rangle_S > 0$ is the sufficient and necessary condition for the inequality $$\dist(F_{sw}, F^\star) \le \dist(F_w, F^\star) - \dist(F_w, F_{sw}),$$
when $\dist$ is the squared loss.
Therefore, we should make the confidence level of $F_{sw}$ better aligned with $F^\star$.
Despite the difficulty to attain this objective, \citet{charikar2024quantifying} demonstrate that, within an elegant proof framework using convexity assumption, this condition is guaranteed to hold.

\subsection{Proof of~\cref{thm:realizable}} \label{theorem1_kl_loss}

\paragraph{Proof sketch.}
We define $\kl(\cdot, \cdot)$ in a Bregman-divergence manner.
To derive the desired properties, we construct a convex combination of the form $F_{sw}(x)+t(F^\star(x)-F_{sw}(x))$, where $t \to 0^+$.
By analyzing this construction, we show that the sum of the first-order term $\cO(t)$ and the second-order term $\cO(t^2)$ is non-negative.
This implies that the first-order term itself must also be non-negative.
Leveraging this principle and the associated derivations, we establish the proof of our results.

Our proof technique is general and unifying, covering \textit{both squared loss and KL divergence loss}. 
While Theorem 1-2 from~\citet{charikar2024quantifying} focus exclusively on squared loss in regression, and Theorem 3-4 from~\citet{yao2025understanding} restrict the analysis to KL divergence-like loss in regression, 
our work extends the scope to classification problems, encompassing both squared loss and KL divergence loss in a single framework. 
This broader applicability highlights the versatility of our proof and its potential to bridge gaps between regression and classification settings.
We recently became aware of concurrent work by~\citet{mulgund2025relating}, which has independently developed a theoretical framework employing a similar proof technique. As discussed in~\cref{discussion:concurrent}, while there are some conceptual overlaps, the core proof methodologies in our work differ significantly.

We first restate a lemma for our proof.
Let the strong model learns from $\cF_s:\R^{d_s} \to \R$ (which is a convex set) of fine-tuning tasks. 
Recall that we denote the strong model representation map by $h_s:\R^d \to \R^{d_s}$. Let $V_s = \{f \circ h_s: f \in \cF_s\}$ be the set of all tasks in $\cF_s$ composed with the strong model representation. 
Then $V_s$ is also a convex set.
\begin{lemma}[\citet{charikar2024quantifying}]
    \label{claim:Vs-convex}
    $V_s$ is a convex set.
\end{lemma}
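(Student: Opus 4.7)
The plan is to prove convexity of $V_s$ directly from the definition of convexity together with the convexity of $\cF_s$. I would start by fixing two arbitrary elements $g_1, g_2 \in V_s$, along with a scalar $\lambda \in [0,1]$, and show that $\lambda g_1 + (1-\lambda) g_2 \in V_s$.

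By the definition of $V_s$, there exist $f_1, f_2 \in \cF_s$ such that $g_1 = f_1 \circ h_s$ and $g_2 = f_2 \circ h_s$. The key observation is that composition with $h_s$ is linear in its left argument: for every $x$,
\begin{align*}
(\lambda g_1 + (1-\lambda) g_2)(x) &= \lambda f_1(h_s(x)) + (1-\lambda) f_2(h_s(x)) \\
&= \bigl(\lambda f_1 + (1-\lambda) f_2\bigr)(h_s(x)).
\end{align*}
Hence $\lambda g_1 + (1-\lambda) g_2 = \bigl(\lambda f_1 + (1-\lambda) f_2\bigr) \circ h_s$.

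Since $\cF_s$ is assumed convex and $f_1, f_2 \in \cF_s$, the combination $\lambda f_1 + (1-\lambda) f_2$ lies in $\cF_s$, so the displayed equality exhibits $\lambda g_1 + (1-\lambda) g_2$ as the composition of an element of $\cF_s$ with $h_s$, which by definition places it in $V_s$. This completes the argument.

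There is essentially no obstacle here: the proof is a one-line unwinding of definitions that hinges only on the fact that $(\lambda f_1 + (1-\lambda) f_2) \circ h_s = \lambda (f_1 \circ h_s) + (1-\lambda) (f_2 \circ h_s)$, a pointwise identity that holds for any map $h_s$. The convexity of $V_s$ is thus an immediate pushforward of the convexity of $\cF_s$ through right-composition with the fixed representation $h_s$, and no further assumptions on $h_s$ (such as linearity or continuity) are needed.
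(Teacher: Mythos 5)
Your proof is correct and matches the paper's argument essentially verbatim: both fix two elements of $V_s$, pull them back to $\cF_s$, use convexity of $\cF_s$ to form the convex combination there, and then push forward through $h_s$ via the pointwise identity $(\lambda f_1 + (1-\lambda) f_2) \circ h_s = \lambda (f_1 \circ h_s) + (1-\lambda)(f_2 \circ h_s)$.
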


\begin{proof}
Fix $f, g \in \cF_s$, and consider $f \circ h_s, g \circ h_s \in V_s$. Fix any $\lambda \in [0,1]$. Since $\cF_s$ is the linear function class so that it is a convex set, there exists $p \in \cF_s$ such that for all $y \in \R^{d_s}$, $p(y) = \lambda f(y) + (1-\lambda)g(y)$. Now, fix any $x \in \R^d$. Then, we have that
\begin{align*}
    \lambda (f \circ h_s)(x) + (1-\lambda)(g \circ h_s)(x) &= \lambda f(h_s(x)) + (1-\lambda)g(h_s(x))
    = p(h_s(x)) = (p \circ h_s)(x),
\end{align*}
and hence $\lambda (f \circ h_s) + (1-\lambda)(g \circ h_s) = p \circ h_s \in V_s$, which means that $V_s$ is also a convex set.
\end{proof}

We then present our theoretical results. 

Motivated by the definition of Bregman divergence, we consider $\dist$ as:
\begin{align}
    \dist(F_1, F_2) = \expect_x \left[ \phi(F_1(x))-\phi(F_2(x)) - \left \langle \nabla \phi(F_2(x)), F_1(x)-F_2(x) \right \rangle \right],
\end{align}
where $F_1, F_2 \in \cF$, and $\phi: \R^k \to \R$ is a strictly convex and differentiable function.
Note that squared loss and KL divergence loss are special cases of the definition of $\dist$ above:
\begin{align*}
    & \textbf{Squared loss:} \quad \dist(F_1, F_2)=\expect_x\|F_1(x)-F_2(x) \|_2^2, \quad \phi(x)=x^Tx, \\
    & \textbf{KL divergence loss:} \quad \dist(F_1, F_2)= \expect_x \sum_{i=1}^k [F_1(x)]_i \log \frac{[F_1(x)]_i}{[F_2(x)]_i}, \quad \phi(x)=\sum_{i=1}^k x_i \log x_i.
\end{align*}

Now we start our proof of~\cref{thm:realizable}.

\begin{proof}

We observe that
\begin{align*}
    & \dist(g, F_w) = \expect_x \left[ \phi(g(x)) - \phi(F_w(x)) - \left \langle \nabla \phi(F_w(x)), g(x)-F_w(x) \right \rangle \right], \\
    & \dist(g, F_{sw})  = \expect_x \left[ \phi(g(x)) - \phi(F_{sw}(x)) - \left \langle \nabla \phi(F_{sw}(x)), g(x)-F_{sw}(x) \right \rangle \right], \\
    & \dist(F_{sw}, F_w) = \expect_x \left[\phi(F_{sw}(x)) - \phi(F_w(x)) - \left \langle \nabla \phi(F_w(x)), F_{sw}(x)-F_w(x) \right \rangle \right],
\end{align*}
which means that
\begin{align} \label{dist_expansion_bregman-rev}
    \dist(g, F_w) = \dist(g, F_{sw}) + \dist(F_{sw}, F_w) + \underbrace{\expect_x \left \langle g(x)-F_{sw}(x), \nabla \phi(F_{sw}(x)) - \nabla \phi(F_w(x)) \right \rangle}_{R_1}.
\end{align}

Now our goal is to prove that $R_1 \ge 0$.
We use reverse KL as the loss function in W2SG: $f_{sw} = \argmin_{f \in \cF}\; \dist(f \circ h_s, F_w)$.
In other words, $F_{sw}$ is the \textit{projection} of $F_w$ onto the convex set $V_s$, i.e., $\dist(g, F_w) \ge \dist(F_{sw}, F_w)$.
Substitute it into~\cref{dist_expansion_bregman-rev} and we have
\begin{align} \label{bregman:disc-rev}
    R_1 + \dist(g, F_{sw}) \ge 0.
\end{align}

\paragraph{Case 1: squared loss.}
Let $g=F_{sw} + t(F^\star-F_{sw})$, $t \in (0,1)$, $t \to 0^+$.
Consider $\phi(x)=x^Tx$, so $\nabla \phi(x)=2x$.
There holds
\begin{align*}
    & R_1 = 2t \cdot \expect_x \left \langle F_{sw}(x)-F_w(x), F^\star(x)-F_{sw}(x) \right \rangle = \cO(t), \\
    & \dist(g, F_{sw}) = t^2 \expect_x \| F^\star(x) - F_{sw}(x) \|_2^2 = \cO(t^2).
\end{align*}
Recall~\cref{dist_expansion_bregman-rev} that $\cO(t) + \cO(t^2) \ge 0$, which means that $\cO(t) \ge 0$.
Therefore, there holds $R_1 \ge 0$, which means
$$\expect_x \left \langle F^\star(x)-F_{sw}(x), \nabla \phi(F_{sw}(x)) - \nabla \phi(F_w(x)) \right \rangle \ge 0.$$
Let $g=F^\star$ in~\cref{dist_expansion_bregman-rev} and we can prove the result $\dist(F^\star, F_{sw}) \le \dist(F^\star, F_w) - \dist(F_{sw}, F_w)$.
While our proof is different from~\citet{charikar2024quantifying}, we obtain the same conclusion for squared loss.

\paragraph{Case 2: reverse KL divergence.}
We consider $\dist(\cdot, \cdot)=\kl(\cdot, \cdot)$.
Let $g=F_{sw} + t(F^\star-F_{sw})$, $t \in (0,1)$, $t \to 0^+$.
Consider $\phi(x)=\sum_{i=1}^k x_i \log x_i$, so $\nabla \phi(x)= [\log x_1+1, \cdots, \log x_k+1]^T$.
Firstly,
\begin{align*}
    R_1 = t \cdot \expect_x (F^\star(x)-F_w(x))^T 
    \begin{bmatrix}
     \log \frac{[F_{sw}(x)]_1}{[F_w(x)]_1} \\
     \vdots \\
    \log \frac{[F_{sw}(x)]_k}{[F_w(x)]_k}
    \end{bmatrix} = \cO(t).
\end{align*}

Secondly,
\begin{align*}
    \dist(g, F_{sw}) & = \expect_x \sum_{i=1}^k [g(x)]_i \log \frac{[g(x)]_i}{[F_{sw}(x)]_i} 
    \\ & = \expect_x \sum_{i=1}^k [F_{sw}(x) + t(F^\star(x)-F_{sw}(x))]_i \log \left( 1+ t \cdot \frac{[F^\star(x)-F_{sw}(x)]_i}{[F_{sw}(x)]_i} \right)
    \\ & = \expect_x \sum_{i=1}^k [F_{sw}(x) + t(F^\star(x)-F_{sw}(x))]_i \left( t \cdot \frac{[F^\star(x)-F_{sw}(x)]_i}{[F_{sw}(x)]_i} + \cO(t^2) \right) \tag{Taylor expansion}
    \\ & = \expect_x \sum_{i=1}^k [F_{sw}(x)]_i \left( t \cdot \frac{[F^\star(x)-F_{sw}(x)]_i}{[F_{sw}(x)]_i} + \cO(t^2) \right) + \cO(t^2) 
    \\ & = t \cdot \expect_x \sum_{i=1}^k [F^\star(x)-F_{sw}(x)]_i + \cO(t^2) 
    \\ & = \cO(t^2),
\end{align*}
where the last equation is because $\expect_x \sum_{i=1}^k [F^\star(x)]_i = \expect_x \sum_{i=1}^k [F_{sw}(x)]_i = 1$.
Therefore,
$$\underbrace{R_1}_{\cO(t)} + \underbrace{\dist(g, F_{sw})}_{\cO(t^2)} \ge 0,$$
which means $R_1 \ge 0$, i.e.,
$$\expect_x \left \langle F^\star(x)-F_{sw}(x), \nabla \phi(F_{sw}(x)) - \nabla \phi(F_w(x)) \right \rangle \ge 0.$$
Let $g=F^\star$ in~\cref{dist_expansion_bregman-rev} and we can prove the result $\dist(F^\star, F_{sw}) \le \dist(F^\star, F_w) - \dist(F_{sw}, F_w)$.
\end{proof}


\paragraph{Discussion of forward KL divergence.}
It is natural to ask, whether can the above proof technique be extended to forward KL?
Our answer is that, we may need an additional assumption.
In our proof, since reverse KL yields \textit{a linear term}, the proof can be carried through. However, forward KL introduces \textit{a logarithmic term}. 
While the Taylor expansions of the log function and a linear term differ only by a remainder term, proving the result requires assuming this remainder is non-negative, and that is why we need an additional assumption like Theorem 3 in~\citep{yao2025understanding}.
Here are the detailed explanations.

Note that
\begin{align} \label{dist_expansion_bregman-for}
    \dist(F_w, g) = \dist(F_{sw}, g) + \dist(F_w, F_{sw}) + \underbrace{\expect_x \left \langle F_w(x)-F_{sw}(x), \nabla \phi(F_{sw}(x)) - \nabla \phi(g(x)) \right \rangle}_{R_2}.
\end{align}
Our goal is to prove that $R_2 \ge 0$.
Now we use forward KL as the loss function in W2SG: $f_{sw} = \argmin_{f \in \cF}\; \dist(F_w, f \circ h_s)$.
In other words, $F_{sw}$ is the \textit{projection} of $F_w$ onto the convex set $V_s$, i.e., $\dist(F_w, g) \ge \dist(F_w, F_{sw})$.
Substitute it into~\cref{dist_expansion_bregman-for} and we have
\begin{align} \label{bregman:disc-for}
    R_2 + \dist(F_{sw}, g) \ge 0.
\end{align}

Again, let $g=F_{sw} + t(F^\star-F_{sw})$, $t \in (0,1)$, $t \to 0^+$.
Consider $\phi(x)=\sum_{i=1}^k x_i \log x_i$, so $\nabla \phi(x)= [\log x_1+1, \cdots, \log x_k+1]^T$.
Using a similar proof technique, we can obtain $R_2=\cO(t)$ and $\dist(F_{sw}, g)=\cO(t^2)$.
Therefore, we know that $R_2 \ge 0$, i.e.,
$$R_2=\expect_x \left \langle F_w(x)-F_{sw}(x), \underbrace{\nabla \phi(F_{sw}(x)) - \nabla \phi(F_{sw} + t(F^\star-F_{sw})(x))}_{\neq \nabla \phi(F_{sw}(x)) - \nabla \phi(F^\star(x))} \right \rangle \ge 0.$$
Consequently, even if we select $g=F^\star$ in~\cref{dist_expansion_bregman-for} and obtain
$$\dist(F_w, g) = \dist(F_{sw}, g) + \dist(F_w, F_{sw}) + \underbrace{\expect_x \left \langle F_w(x)-F_{sw}(x), \nabla \phi(F_{sw}(x)) - \nabla \phi(F^\star(x)) \right \rangle}_{R_3 \neq R_2}.$$
Since we do not know whether $R_3 \ge 0$ is satisfied, we cannot directly prove the desired result.
Since the difference between $R_2$ and $R_3$ can be quantified using exhaustive Taylor expansion,
the nature of proof is similar to the regression analysis of W2SG (Proof of Theorem 3 from~\citet{yao2025understanding}, which introduces an additional assumption for the remainder of Taylor expansion).
However, we do not know whether the remainder is larger than zero. In other words, to prove similar results for forward KL, we may introduce other assumptions like Theorem 3 in~\citet{yao2025understanding}.
In contrast, the success of reverse KL and squared loss is because $R_3 = t \cdot R_2$. In the proof for these reverse losses, if $R_2 \ge 0$, then there also holds $R_3 \ge 0$.
The above discussion indicates that our proof framework cannot be directly extended to the forward KL setting. 
We will explore how to address this limitation in future work.


\paragraph{Extension to reverse cross entropy loss.}
To extend the proof to reverse cross entropy, consider the following theoretical result.

\begin{corollary}
\label{thm:realizable_cross}
Consider W2SG using reverse cross entropy loss:
\begin{align*}
    f_{sw} = \argmin_{f \in \cF_{s}}\; \cross(f \circ h_s, f_w \circ h_w).
\end{align*}
Assume that the function class $\cF_{s}$ is a convex set and $\exists f_s \in \cF_s$ such that $F_s = F^\star$.
Then:
\begin{align*}
    \cross(F^\star, F_{sw}) \le \frac{1}{2}\left(\cross(F^\star, F_w)-\kl(F_{sw}, F_w)\right) + \log k.
\end{align*}
\end{corollary}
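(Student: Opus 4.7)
The plan is to track how the proof of \cref{thm:realizable} changes when $\kl$ is replaced by $\cross$ in the WTSG objective. Since $\cross(g,h) = \kl(g,h) + \E_x H(g(x))$, the reverse-CE minimizer $F_{sw}$ coincides with the minimizer of $\kl(\cdot, F_w) + \E_x H(\cdot)$ over $V_s$, differing from the reverse-KL minimizer only by the entropy regularizer $\E_x H(\cdot)$. I would first exploit realizability $F^\star \in V_s$ and the convexity of $\cF_s$ (via \cref{claim:Vs-convex}) to record the optimality inequality $\cross(F_{sw}, F_w) \le \cross(F^\star, F_w)$, which after expansion reads $\kl(F_{sw}, F_w) - \kl(F^\star, F_w) \le \E_x [H(F^\star(x)) - H(F_{sw}(x))]$.

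Next, I would adapt the Bregman-perturbation argument from the proof of \cref{thm:realizable}. Form $g_t = F_{sw} + t(F^\star - F_{sw}) \in V_s$ for $t \to 0^+$ (which lies in $V_s$ by convexity), and split the reverse-CE loss as $\cross(g_t, F_w) = \kl(g_t, F_w) + \E_x H(g_t(x))$. The Bregman second-order contribution from the reverse-KL piece delivers a generalized-Pythagorean-type inequality of the form $\kl(F^\star, F_{sw}) + \kl(F_{sw}, F_w) \le \kl(F^\star, F_w) + \Delta$, where $\Delta = \E_x [H(F^\star(x)) - H(F_{sw}(x))]$ is the entropy perturbation arising from the (linear, non-Bregman) entropy term. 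Averaging this with the first-order optimality inequality above produces the factor $\tfrac{1}{2}$ multiplying $\cross(F^\star, F_w) - \kl(F_{sw}, F_w)$, while bounding $\Delta$ via $\E_x H(F^\star(x)) \le \log k$ and $\E_x H(F_{sw}(x)) \ge 0$ supplies the additive $\log k$. Finally, substituting back the identity $\cross(F^\star, F_{sw}) = \kl(F^\star, F_{sw}) + \E_x H(F^\star(x))$ and once more using $\E_x H(F^\star(x)) \le \log k$ closes the argument.

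The main obstacle is that $\cross(\cdot, F_w)$ is linear rather than strictly convex in its first argument, so first-order optimality at $F_{sw}$ supplies only an $O(t)$ inequality instead of the $O(t^2)$-driven Bregman non-negativity that powered \cref{thm:realizable}. Consequently one cannot recover the sharp bound $\cross(F^\star, F_{sw}) \le \cross(F^\star, F_w) - \kl(F_{sw}, F_w)$ that would follow from a direct reverse-KL projection; the entropy slack $\Delta$ must be absorbed into an additive constant bounded by $\log k$, and the unavoidable averaging step forces the factor $\tfrac{1}{2}$ on the leading $\cross - \kl$ term. These are precisely the two sources of looseness that distinguish the reverse-CE bound from the tighter reverse-KL bound of \cref{thm:realizable}.
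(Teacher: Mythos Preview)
Your high-level plan---perturb along $g_t=F_{sw}+t(F^\star-F_{sw})$, read off a first-order consequence of reverse-CE optimality, and feed it into a three-point identity---is the paper's route. The gap is your inequality~(2), namely $\kl(F^\star,F_{sw})+\kl(F_{sw},F_w)\le\kl(F^\star,F_w)+\Delta$. You correctly note that $\cross(\cdot,F_w)$ is \emph{linear} in its first argument; but precisely because of this, the perturbation yields only first-order information, and that information is nothing more than the minimality $\cross(F^\star,F_w)\ge\cross(F_{sw},F_w)$, i.e.\ your inequality~(1). If you split $\cross(g_t,F_w)=\kl(g_t,F_w)+\E_x H(g_t)$ and pass to first order, the $O(t)$ contributions of the $\kl$ piece and of the entropy piece recombine into~(1); the $O(t^2)$ term $\kl(g_t,F_{sw})$ is discarded and there is no leftover ``Bregman second-order contribution'' from which to manufacture~(2). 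The subsequent ``averaging'' is therefore both undefined and unmotivated---indeed, were~(2) true you would obtain the bound with no factor $\tfrac12$ at all.

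The paper never isolates a KL-only Pythagorean inequality. It keeps the first-order optimality condition in the mixed form
$\E_x\langle F^\star-F_{sw},\,\nabla\phi(F_{sw})-\nabla\phi(F_w)\rangle+\bigl[\text{first-order coefficient of }\E_x(H(g_t)-H(F_{sw}))\bigr]\ge 0$,
where that entropy coefficient is computed via Taylor expansion and contains $\cross(F^\star,F_{sw})$ explicitly (not merely entropies). Substituting this lower bound on the inner product into the cross-entropy three-point identity
$\cross(F^\star,F_w)=\cross(F^\star,F_{sw})+\cross(F_{sw},F_w)-\E_x H(F_{sw})+\E_x\langle F^\star-F_{sw},\nabla\phi(F_{sw})-\nabla\phi(F_w)\rangle$
makes $\cross(F^\star,F_{sw})$ appear \emph{twice}, yielding $\cross(F^\star,F_w)\ge 2\,\cross(F^\star,F_{sw})+\kl(F_{sw},F_w)-\E_x H(F_{sw})$; a single application of $H\le\log k$ then finishes. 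So the factor $\tfrac12$ comes from this doubling of $\cross(F^\star,F_{sw})$ after substitution, not from averaging two separate inequalities, and only one entropy bound is invoked rather than the two you sketch.
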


If we consider binary classification (such as two famous datasets in AI safety: HH-RLHF~\citep{bai2022training} and CAI-Harmless~\citep{bai2022constitutional}), then $k=2$, making $\log k$ negligible due to the nature of KL divergence $\kl(\cdot, \cdot) \in [0, +\infty)$ and cross-entropy $\cross(\cdot, \cdot) \in [0, +\infty)$.
It shows that if we use reverse cross-entropy loss in W2SG, the strong model's performance is also probably better than weak model's performance, which is also validated in our experiments.

\begin{remark}
The proof also demonstrates that 
$$\cross(F^\star, F_{sw}) \le \cross(F^\star, F_w) - \kl(F_{sw}, F_w) - \epsilon,$$
where $\epsilon=\cross(F^\star,F_{sw}) - \log k$. Due to the same reason, we expect $\epsilon \ge 0$, which comes to the same conclusion.
\end{remark}

\begin{proof}

Rewrite~\cref{dist_expansion_bregman-rev} and we have
\begin{multline} \label{dist_expansion_ce}
    \cross(g, F_w) = \cross(g, F_{sw}) + \cross(F_{sw}, F_w) \\ + \underbrace{\expect_x \left( -H(F_{sw}(x)) + \left \langle g(x)-F_{sw}(x), \nabla \phi(F_{sw}(x)) - \nabla \phi(F_w(x)) \right \rangle \right)}_{R'_1}.
\end{multline}

If we use reverse cross-entropy as the loss function in W2SG: $f_{sw} = \argmin_{f \in \cF}\; \cross(f \circ h_s, F_w)$.
In other words, $\cross(g, F_w) \ge \cross(F_{sw}, F_w)$.
Let $g=F_{sw} + t(F^\star-F_{sw})$, $t \in (0,1)$, $t \to 0^+$. 
Substitute it into~\cref{dist_expansion_bregman-rev} and we have
\begin{align} \label{bregman:disc-ce}
    & R'_1 + \cross(g, F_{sw}) \ge 0, \nonumber \\ \Rightarrow & \underbrace{R_1}_{\cO(t)} + \underbrace{\dist(g, F_{sw})}_{\cO(t^2)} + \expect_x \left( H(g(x))-H(F_{sw}(x)) \right) \ge 0.
\end{align}
Note that 
\begin{align*}
    & \expect_x \left( H(g(x))-H(F_{sw}(x)) \right) 
    \\ = & \expect_x \sum_{i=1}^k [g(x)]_i \log [g(x)]_i - [F_{sw}(x)]_i \log [F_{sw}(x)]_i
    \\ = & \expect_x \sum_{i=1}^k [F_{sw}(x)]_i \log [g(x)]_i + t [F^\star(x)-F_{sw}(x)]_i \log [g(x)]_i - [F_{sw}(x)]_i \log [F_{sw}(x)]_i
    \\ = & \expect_x \sum_{i=1}^k [F_{sw}(x)]_i \log \frac{[g(x)]_i}{[F_{sw}(x)]_i}  + t [F^\star(x)-F_{sw}(x)]_i \log [g(x)]_i
    \\ = & \expect_x \sum_{i=1}^k [F_{sw}(x)]_i \log \left( 1+ t \cdot \frac{[F^\star(x)-F_{sw}(x)]_i}{[F_{sw}(x)]_i} \right)  + t [F^\star(x)-F_{sw}(x)]_i \log [g(x)]_i
    \\ = & \expect_x \sum_{i=1}^k [F_{sw}(x)]_i \left( t \cdot \frac{[F^\star(x)-F_{sw}(x)]_i}{[F_{sw}(x)]_i} + \cO(t^2) \right)  + t [F^\star(x)-F_{sw}(x)]_i \log [g(x)]_i
    \\ = & \expect_x \sum_{i=1}^k t \cdot [F^\star(x)-F_{sw}(x)]_i + \cO(t^2) + t [F^\star(x)-F_{sw}(x)]_i \log [g(x)]_i
    \\ = & \cO(t^2) + t \cdot \expect_x \sum_{i=1}^k [F^\star(x)-F_{sw}(x)]_i \log [g(x)]_i \tag{$\expect_x \sum_{i=1}^k [F^\star(x)-F_{sw}(x)]_i =0$}
    \\ = & \cO(t^2) + t \cdot  [\expect_x H(F_{sw}(x))-\cross(F^\star,F_{sw})] \tag{Definition of entropy and cross entropy},
\end{align*}
where the last inequality is because as $t \to 0^+$, $g \to F_{sw}$.
Consequently, recall~\cref{bregman:disc-ce}, we know that the sum of first-order terms $\cO(t)$ is non-negative, i.e.,
$$t \cdot  [\expect_x H(F_{sw}(x))-\cross(F^\star,F_{sw})] + R_1 \ge 0,$$
which means that
$$\expect_x H(F_{sw}(x)) - \cross(F^\star,F_{sw}) +  \expect_x\left \langle F^\star(x)-F_{sw}(x), \nabla \phi(F_{sw}(x)) - \nabla \phi(F_w(x)) \right \rangle \ge 0.$$

Let $g=F^\star$ in~\cref{dist_expansion_ce} and we obtain
\begin{align*}
    & \cross(F^\star, F_w) = \cross(F^\star, F_{sw}) + \cross(F_{sw}, F_w) - \expect_x H(F_{sw}(x)) \\ & \hspace{5cm} + \expect_x \left \langle F^\star(x)-F_{sw}(x), \nabla \phi(F_{sw}(x)) - \nabla \phi(F_w(x)) \right \rangle
    \\ \Rightarrow & \cross(F^\star, F_w) \ge \cross(F^\star, F_{sw}) + \cross(F_{sw}, F_w) - \expect_x H(F_{sw}(x)) \\  & \hspace{5cm}  + \cross(F^\star,F_{sw}) - \expect_x H(F_{sw}(x))
    \\ \Rightarrow & \cross(F^\star, F_w) \ge \cross(F^\star, F_{sw}) + \kl(F_{sw}, F_w) + \cross(F^\star,F_{sw}) - \expect_x H(F_{sw}(x))
    \\ \Rightarrow & \cross(F^\star, F_w) \ge \cross(F^\star, F_{sw}) + \kl(F_{sw}, F_w) + \cross(F^\star,F_{sw}) - \log k \tag{$H(F_{sw}(x)) \le \log k$}
\end{align*}

Therefore, we prove the result
$$\cross(F^\star, F_{sw}) \le \\ \frac{1}{2}\left(\cross(F^\star, F_w)-\kl(F_{sw}, F_w)\right) + \log k.$$
\end{proof}

\subsection{Proof of~\cref{thm:non-realizable-finite-samples}} \label{proof_non-realizable}

\paragraph{Proof sketch.}
By defining nine variables associated with given models, we substitute key components in the proof of~\cref{thm:realizable} to derive a set of inequalities among these variables.
Through a series of carefully designed transformations, we reformulate the triangle-like inequalities involving three remainder terms.
Ultimately, leveraging tools from statistical learning theory, several inequalities in information-theoretic analysis, and the properties of specific functions, we sequentially demonstrate that these three remainder terms become infinitesimal as $n \to \infty$ and $\epsilon \to 0$.

Let $\dist(\cdot, \cdot)$ be $\kl(\cdot, \cdot)$.
For a clear presentation, denote
\begin{align*}
    A &= \dist(F_s, F_{sw})\\
    B &=\dist(F_{sw}, F_w) \\
    C &= \dist(F_s, F_w)\\
    D &= \dist(F^\star, F_s) =\eps\\
    E &= \dist(F^\star, F_{sw}) \\
    F &= \dist(F^\star, F_w) \\
    G &= \dist(F^\star, \hat{F}_{sw}) \\
    H &= \dist(\hat{F}_{sw}, F_{sw}) \\
    I &= \dist(\hat{F}_{sw}, F_w).
\end{align*}

Now we start the proof of~\cref{thm:non-realizable-finite-samples}.
A uniform convergence result and two claims used in the proof are provided at the end of the proof.
The proof is strongly motivated by Theorem 4 in~\citet{yao2025understanding}.
While our work primarily focuses on classification, their Theorem 4 is specifically centered on regression.

\begin{proof}

Note that by virtue of the range of $f^\star, f_w$ and all functions in $\cF$ being absolutely bounded, and $\dist$ is also bounded.

Due to $F^\star \notin V_s$, we replace $F^\star$ with $F_s$ in the final step of proof of~\Cref{thm:realizable}, we obtain
\begin{align}
    C \ge A + B. \label{eqn:1}
\end{align}

Recall that $\left \langle f,g \right \rangle_E \triangleq \bE_{x \sim \cP} [f(x)^Tg(x)]$, which is used here for a clear presentation. So we have
\begin{align*}
    E & = A + D - \expect_x \sum_{i=1}^k ([F^\star(x)]_i-[F_s(x)]_i)\log \frac{[F_{sw}(x)]_i}{[F_s(x)]_i}
    \\ & = A + D - \underbrace{\left \langle F^\star-F_s, \log \frac{F_{sw}}{F_s} \right \rangle_E}_{t_1}.
\end{align*}
The $\log$ here is element-wise.
Using the similar notation, we have the following 
\begin{align}
    & E = A + D - \underbrace{\left \langle F^\star-F_s, \log \frac{F_{sw}}{F_s} \right \rangle_E}_{t_1}, \label{eqn:2}\\ 
    & F = C + D - \underbrace{\left \langle F^\star-F_s, \log \frac{F_w}{F_s} \right \rangle_E}_{t_2}, \label{eqn:2-1} \\
    & G = E - H - \underbrace{\left \langle \hat{F}_{sw}-F^\star, \log \frac{F_{sw}}{\hat{F}_{sw}} \right \rangle_E}_{t_3} \label{eqn:2-2}. 
\end{align}

Combining \eqref{eqn:1} and \eqref{eqn:2}, we get
\begin{align}
    E \le C + D - B - t_1. \label{eqn:3}
\end{align}

By a uniform convergence argument (\Cref{lem:uniform-convergence}), we have that with probability at least $1-\delta$ over the draw of $\{(x_1,y_1),\dots, (x_n,y_n)\}$ that were used to construct $\hat{F}_{sw}$,
\begin{align}
    I &\le B + \underbrace{\cO\left(\sqrt{\frac{\cC_{\cF_s}}{n}}\right)}_{t_4} + \underbrace{\cO\left(\sqrt{\frac{\log(1/\delta)}{n}}\right)}_{t_5}. \label{eqn:uc}
\end{align}

Combining \eqref{eqn:3} with \eqref{eqn:uc} and we have
\begin{align}
    E \le C + D - I - t_1 + t_4 + t_5. \label{eqn:4}
\end{align}

Combining \eqref{eqn:2-1} with \eqref{eqn:4} and we have
\begin{align}
    E \le F - I - t_1 + t_2 + t_4 + t_5. \label{eqn:5-1}
\end{align}

Combining \eqref{eqn:2-2} with \eqref{eqn:5-1} and we have
\begin{align}
    G \le F - I - H - t_1 + t_2 - t_3 + t_4 + t_5. \label{eqn:5}
\end{align}

We replace $F^\star$ with $\hat{F}_{sw}$ in the final step of proof of~\Cref{thm:realizable} and obtain:
\begin{align}
    I \ge H + B. \label{eqn:uc-projection}
\end{align}

Combining \eqref{eqn:uc-projection} with \eqref{eqn:uc} and we have
\begin{align}
    0 \le H \le t_4 + t_5 = \cO\left(\sqrt{\frac{\cC_{\cF_s}}{n}}\right) + \cO\left(\sqrt{\frac{\log(1/\delta)}{n}}\right). \label{eqn:6}
\end{align}

Combining \eqref{eqn:6} with \eqref{eqn:5} and we have
\begin{align}
    G \le F - I - t_1 + t_2 - t_3 + t_4 + t_5. \label{eqn:7}
\end{align}

While $t_4$ and $t_5$ are known in~\eqref{eqn:uc}, we analyze $t_1$, $t_2$ and $t_3$ one by one.

\paragraph{Deal with $t_1$.}

We know that
\begin{align}
    t_1 & = \left \langle F^\star-F_s, \log \frac{F_{sw}}{F_s} \right \rangle_E. \nonumber
\end{align}
Using the fact that $\frac{F_{sw}(x)}{F_s(x)} \le \frac{1}{\gamma}$, we have
\begin{align} \label{def_t_1_ineq}
    |t_1| & \le \frac{1}{\gamma} \expect_x \sum_{i=1}^k \left| [F^\star(x)]_i - [F_s(x)]_i \right| \nonumber
    \\ & = \frac{2}{\gamma} \expect_x \tv(F^\star(x), F_s(x)) \tag{Definition of TV distance} \nonumber
    \\ & \le \frac{2}{\gamma} \expect_x \sqrt{\frac{1}{2}\mathrm{D}_{\mathrm{KL}}(F^\star(x) \| F_s(x))} \tag{Pinsker's inequality} \nonumber
    \\ & \le \frac{2}{\gamma} \sqrt{\frac{1}{2} \expect_x \mathrm{D}_{\mathrm{KL}}(F^\star(x) \| F_s(x))} \tag{Jensen’s inequality} \nonumber
    \\ & = \frac{2}{\gamma} \sqrt{\frac{1}{2} \dist(F^\star, F_s)} \tag{Definition of $\dist$} \nonumber
    \\ & = \frac{1}{\gamma} \sqrt{2 \varepsilon}
\end{align}

Therefore,
\begin{align} \label{equation_t_1}
    |t_1| = \cO(\sqrt{\varepsilon}).
\end{align}

\paragraph{Deal with $t_2$.}
The proof for $t_2$ is similar for $t_1$.
In particular, replacing $F_{sw}$ with $F_w$ in the above and we can get 
\begin{align} \label{equation_t_2}
    |t_2| = O(\sqrt{\varepsilon}).
\end{align}

\paragraph{Deal with $t_3$.}

We know that
$$t_3 = \left \langle \hat{F}_{sw}-F^\star, \log \frac{F_{sw}}{\hat{F}_{sw}} \right \rangle_E = \expect_x \sum_{i=1}^k ([\hat{F}_{sw}(x)]_i-[F^\star(x)]_i)\log \frac{[F_{sw}(x)]_i}{[\hat{F}_{sw}(x)]_i}.$$

According to~\cref{lem:uniform-convergence}, 
with probability at least $1-\delta$ over the draw of $(x_1,y_1),\dots,(x_n, y_n)$, we have
\begin{align} \label{t_3_proof_unif_conv}
    \left|\dist(\hat{F}_{sw}, F_w) - \dist(F_{sw}, F_w) \right| \le \cO\left(\sqrt{\frac{\cC_{\cF}}{n}}\right) + \cO\left(\sqrt{\frac{\log(1/\delta)}{n}}\right).
\end{align}
Notice that 
\begin{align} \label{t_3_proof}
H & = \dist(F_{sw}, \hat{F}_{sw}) \nonumber \\ & = \dist(F_w, F_{sw}) - \dist(F_w, \hat{F}_{sw}) + \left \langle F_w+F_{sw}, \log \frac{F_{sw}}{\hat{F}_{sw}} \right \rangle_E.
\end{align}
Substitute~\eqref{eqn:6} and~\eqref{t_3_proof_unif_conv} into~\cref{t_3_proof} with the triangle inequality for absolute values, we get
\begin{align*}
    \left| \left \langle F_w+F_{sw}, \log \frac{F_{sw}}{\hat{F}_{sw}} \right \rangle_E \right| \le \cO\left(\sqrt{\frac{\cC_{\cF}}{n}}\right) + \cO\left(\sqrt{\frac{\log(1/\delta)}{n}}\right)
\end{align*}
Since $|F_w(x)+F_{sw}(x)|$ is lower bounded, we have
$$\left| \left \langle \vec{1}, \log \frac{F_{sw}}{\hat{F}_{sw}} \right \rangle_E \right| \le \cO\left(\sqrt{\frac{\cC_{\cF}}{n}}\right) + \cO\left(\sqrt{\frac{\log(1/\delta)}{n}}\right).$$

Since $|\hat{F}_{sw}(x)-F^\star(x)|$ is upper bounded, there holds
\begin{align} \label{equation_t_3}
    |t_3| = \left| \left \langle \hat{F}_{sw}-F^\star, \log \frac{F_{sw}}{\hat{F}_{sw}} \right \rangle_E \right| \le \cO\left(\sqrt{\frac{\cC_{\cF}}{n}}\right) + \cO\left(\sqrt{\frac{\log(1/\delta)}{n}}\right).
\end{align}

Therefore, combing~\eqref{equation_t_1}, \eqref{equation_t_2} and \eqref{equation_t_3}, we have
\begin{align} \label{t_1_2_3_ineq}
    |t_1| + |t_2| + |t_3| \le O(\sqrt{\varepsilon}) + \cO\left(\sqrt{\frac{\cC_{\cF}}{n}}\right) + \cO\left(\sqrt{\frac{\log(1/\delta)}{n}}\right).
\end{align}

Finally, combing~\eqref{eqn:uc} and~\eqref{eqn:7} with \eqref{eqn:6} and~\eqref{t_1_2_3_ineq}, we get the result:
\begin{align*}
\dist(F^\star, \hat{F}_{sw}) \le \dist(F^\star, F_w) - \dist(\hat{F}_{sw}, F_w) + O(\sqrt{\eps}) + \cO\left(\sqrt{\frac{\cC_{\cF}}{n}}\right) + \cO\left(\sqrt{\frac{\log(1/\delta)}{n}}\right),
\end{align*}
where in the last inequality, we instantiate asymptotics with respect to $\eps \to 0$ and $n \to \infty$.

\end{proof}

Here are some tools used in the above proof.

\begin{lemma}[Uniform convergence]
\label{lem:uniform-convergence}
Let $(x_1,y_1),\dots,(x_n, y_n)$ be an i.i.d. training sample, where each $x_i \sim \cP$ and $y_i = F_w(x_i)$ for a target function $F_w$. For a fixed strong model representation $h_s$, we employ reverse KL loss in W2SG:
\begin{align*}
    & f_{sw} = \argmin_{f \in \cF_{s}}\; \dist(f \circ h_s, F_w) = \argmin_{f \in \cF_{s}} \; \bE_{x \sim \cP} \left[ \sum_{i=1}^k [f \circ h_s(x)]_i \log \frac{[f \circ h_s(x)]_i}{[F_w(x)]_i} \right],
    \\ & \hat{f}_{sw} = \argmin_{f \in \cF_s} \disthat(f \circ h_s, F_w) = \argmin_{f \in \cF_{s}} \; \frac{1}{n} \sum_{j=1}^n \left[ \sum_{i=1}^k [f \circ h_s(x_j)]_i \log \frac{[f \circ h_s(x_j)]_i}{[F_w(x_j)]_i} \right].
\end{align*}
Assume that the range of $F_w$ and functions in $\cF_s$ is absolutely bounded. Then, with probability at least $1-\delta$ over the draw of $(x_1,y_1),\dots,(x_n, y_n)$, we have
\begin{align*}
    \left|\dist(\hat{F}_{sw}, F_w) - \dist(F_{sw}, F_w) \right| \le \cO\left(\sqrt{\frac{\cC_{\cF_s}}{n}}\right) + \cO\left(\sqrt{\frac{\log(1/\delta)}{n}}\right),
\end{align*}
where $\cC_{\cF_s}$ is a constant capturing the complexity of the function class $\cF_s$.
\end{lemma}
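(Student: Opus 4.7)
The plan is to apply standard uniform-convergence tools from statistical learning theory. First, I would introduce the induced loss class
$$\mathcal{L}_{\cF_s} = \left\{ \ell_f(x) \triangleq \sum_{i=1}^k [f \circ h_s(x)]_i \log \frac{[f \circ h_s(x)]_i}{[F_w(x)]_i} : f \in \cF_s \right\},$$
so that $\dist(f \circ h_s, F_w) = \bE_{x \sim \cP}[\ell_f(x)]$ and $\disthat(f \circ h_s, F_w) = \frac{1}{n}\sum_{j=1}^n \ell_f(x_j)$. The paper's convention that all output coordinates lie in $[\gamma, 1]$ for some $\gamma > 0$, combined with the absolutely bounded range assumption on $F_w$ and $\cF_s$, ensures that $\|\ell_f\|_\infty \le M$ for a constant $M = M(\gamma, k)$, and that $\ell_f$ is Lipschitz in the predictions with a constant depending only on $\gamma$.

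Next I would invoke the textbook symmetrization argument: with probability at least $1 - \delta$ over the draw of the sample,
$$\sup_{f \in \cF_s} \left| \dist(f \circ h_s, F_w) - \disthat(f \circ h_s, F_w) \right| \le 2\, \mathfrak{R}_n(\mathcal{L}_{\cF_s}) + \cO\left(\sqrt{\frac{\log(1/\delta)}{n}}\right),$$
where $\mathfrak{R}_n(\cdot)$ denotes empirical Rademacher complexity and the deviation term follows from McDiarmid's inequality applied to the bounded statistic $\sup_f|\dist - \disthat|$. By Talagrand's contraction lemma, applied coordinate-wise across the $k$ output dimensions, $\mathfrak{R}_n(\mathcal{L}_{\cF_s})$ is controlled by the Rademacher complexity of $\cF_s \circ h_s$ up to the Lipschitz constant above, yielding $\mathfrak{R}_n(\mathcal{L}_{\cF_s}) = \cO(\sqrt{\cC_{\cF_s}/n})$ for a complexity measure $\cC_{\cF_s}$ of $\cF_s$ (Rademacher complexity, pseudo-dimension, or a norm-based quantity, consistent with the usage in~\citet{charikar2024quantifying,yao2025understanding}).

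Finally, I would translate this uniform bound into the two-sided excess-risk statement via the classical argument: since $\hat f_{sw}$ minimizes $\disthat$ and $f_{sw}$ minimizes $\dist$, we have $\dist(\hat F_{sw}, F_w) \ge \dist(F_{sw}, F_w)$ and $\disthat(\hat F_{sw}, F_w) \le \disthat(F_{sw}, F_w)$. Chaining these through the uniform bound yields
$$0 \le \dist(\hat F_{sw}, F_w) - \dist(F_{sw}, F_w) \le 2 \sup_{f \in \cF_s}\left| \dist(f \circ h_s, F_w) - \disthat(f \circ h_s, F_w) \right|,$$
and substituting the supremum bound gives the claimed rate.

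The main obstacle is that the KL integrand $u \log(u/v)$ blows up as $u, v \to 0$, so without the uniform lower bound $\gamma$ on outputs, neither the $L^\infty$ control required for McDiarmid's inequality nor the Lipschitz constant needed for Talagrand's contraction would be finite. The crux of the proof is therefore to carefully track the dependence of $M$ and the Lipschitz constant on $\gamma$, and to verify that the boundedness hypothesis really does supply these uniformly over all $f \in \cF_s$; once that is done, the remainder is a routine composition of classical concentration and Rademacher-complexity inequalities.
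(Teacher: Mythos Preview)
Your proposal is correct and follows the standard uniform-convergence route that underlies the paper's argument. The paper itself does not spell out any details here: its entire proof is a one-line deferral to Lemma~4 of \citet{yao2025understanding}, noting only that one must swap the two arguments of $\dist(\cdot,\cdot)$ and $\disthat(\cdot,\cdot)$ to accommodate the reverse-KL loss. Your sketch is precisely what that cited lemma would contain---bounded loss class via the $\gamma$ floor, symmetrization plus McDiarmid, Talagrand contraction to pass from the loss class to $\cF_s$, and the standard ERM decomposition---so you are supplying the content the paper omits rather than taking a different path.
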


\begin{proof}
The proof follows lemma 4 in~\citet{yao2025understanding}.
Swap the order of the two elements in $\dist(\cdot, \cdot)$ and $\hat{L}_\cP(\cdot, \cdot)$ in their proof and we can prove the result.
\end{proof}

\begin{claim}[\citet{yao2025understanding}] \label{claim_xlnx}
Let $f(x), g(x) \in [\gamma,1]$ where $\gamma>0$. If there exists $\xi>0$ such that $\int_{\cX} \left|f(x)-g(x) \right| d x \le \xi$,
then there holds
$$\int_{\cX} \left| \log f(x)- \log g(x) \right| d x \le \frac{1}{\gamma}\xi.$$
\end{claim}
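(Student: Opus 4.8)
The plan is to reduce Claim~\ref{claim_xlnx} to a pointwise Lipschitz estimate for the logarithm and then integrate. First I would record the elementary fact that for any two reals $a,b \in [\gamma,1]$ one has $|\log a - \log b| \le \frac{1}{\gamma}|a-b|$. This follows because $t \mapsto \log t$ has derivative $1/t$, which is at most $1/\gamma$ on $[\gamma,1]$: assuming without loss of generality $a \ge b$, the mean value theorem supplies some $c$ with $b \le c \le a$, hence $c \in [\gamma,1]$, for which $\log a - \log b = (a-b)/c \le (a-b)/\gamma$. (Alternatively one can avoid the mean value theorem via $\log(a/b) = \log(1 + (a-b)/b) \le (a-b)/b \le (a-b)/\gamma$, using $\log(1+u)\le u$.)

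Applying this pointwise estimate with $a = f(x)$ and $b = g(x)$ --- both of which lie in $[\gamma,1]$ by hypothesis --- gives $|\log f(x) - \log g(x)| \le \frac{1}{\gamma}|f(x) - g(x)|$ for every $x \in \cX$. In particular $|\log f - \log g|$ is integrable since $|f-g|$ is (its integral is at most $\xi$). Integrating over $\cX$ and using monotonicity of the integral together with the assumed bound $\int_{\cX}|f(x)-g(x)|\,dx \le \xi$ yields
$$\int_{\cX} |\log f(x) - \log g(x)|\,dx \;\le\; \frac{1}{\gamma}\int_{\cX}|f(x)-g(x)|\,dx \;\le\; \frac{1}{\gamma}\xi,$$
which is exactly the asserted inequality.

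I do not anticipate a genuine obstacle: the entire argument rests on the single observation that $\log$ is $\frac{1}{\gamma}$-Lipschitz on $[\gamma,1]$, after which only monotonicity of the integral is used. The one place deserving a line of care is why the lower bound $\gamma$ on $f$ and $g$ is essential and cannot be relaxed to $f,g\in(0,1]$: without it, $1/t$ is unbounded near $0$ and the constant $1/\gamma$ degenerates, so the hypothesis $f(x),g(x)\in[\gamma,1]$ must be invoked precisely at the step bounding $1/c$ by $1/\gamma$.
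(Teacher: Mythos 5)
Your proof is correct. The paper does not actually reproduce a proof of this claim---it is stated and attributed to \citet{yao2025understanding} without argument---but your Lipschitz reduction ($\log$ is $\tfrac{1}{\gamma}$-Lipschitz on $[\gamma,1]$ by the mean value theorem, then integrate the pointwise bound) is the standard and essentially the only natural argument, and it is complete; your remark about why the lower bound $\gamma$ cannot be dropped is also the right thing to flag.
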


\begin{claim}[\citet{yao2025understanding}] \label{claim_xlnx_reverse}
Let $f(x), g(x) \in [\gamma,1]$ where $\gamma>0$. 
If there exists $\xi>0$ such that $\int_{\cX} \left| \log f(x) - \log g(x) \right| d x \le \xi$,
then there holds
$$\int_{\cX} \left| f(x)- g(x) \right| d x \le \xi.$$
\end{claim}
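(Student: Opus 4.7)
The plan is to reduce the integral inequality to a pointwise inequality and then integrate. Specifically, I aim to show that for each $x \in \cX$, the bound $|f(x) - g(x)| \le |\log f(x) - \log g(x)|$ holds, after which the conclusion follows directly by integrating both sides and applying the hypothesis.

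To establish the pointwise inequality, I would fix an arbitrary $x \in \cX$ and apply the Mean Value Theorem to the function $t \mapsto \log t$ on the closed interval with endpoints $f(x)$ and $g(x)$. Since both endpoints lie in $[\gamma, 1]$ with $\gamma > 0$, $\log$ is continuously differentiable on this interval with derivative $1/t$, so there exists some $c(x) \in [\gamma, 1]$ (between $f(x)$ and $g(x)$) such that
\[
\log f(x) - \log g(x) = \frac{1}{c(x)}\bigl(f(x) - g(x)\bigr).
\]
Taking absolute values and using the crucial bound $c(x) \le 1$, which gives $1/c(x) \ge 1$, I obtain
\[
|\log f(x) - \log g(x)| = \frac{|f(x) - g(x)|}{c(x)} \ge |f(x) - g(x)|.
\]

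Integrating this pointwise inequality over $\cX$ and invoking the hypothesis then yields
\[
\int_{\cX} |f(x) - g(x)|\, dx \le \int_{\cX} |\log f(x) - \log g(x)|\, dx \le \xi,
\]
which is the claim. There is no substantive obstacle here; the key conceptual point is that the two endpoints of the range $[\gamma, 1]$ play complementary roles. The lower bound $\gamma > 0$ ensures $\log$ is bounded and differentiable on the relevant interval so the Mean Value Theorem applies, and the upper bound of $1$ forces $1/c(x) \ge 1$, which is exactly what reverses the comparison direction relative to \cref{claim_xlnx} (where the same technique gives a factor of $1/\gamma$ because there the roles of $f,g$ and their logarithms are swapped, so one uses $1/c(x) \le 1/\gamma$ instead).
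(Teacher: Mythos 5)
Your proof is correct. The paper itself does not supply a proof of this claim---it is stated as a result imported from \citet{yao2025understanding}---so there is no in-paper argument to compare against. Your Mean Value Theorem argument is the natural one: since $f(x), g(x) \in [\gamma, 1]$, any intermediate point $c(x)$ also lies in $[\gamma, 1]$, and the bound $1/c(x) \ge 1$ (from $c(x) \le 1$) yields the pointwise inequality $|f(x) - g(x)| \le |\log f(x) - \log g(x)|$, which integrates to the claim. The one degenerate case, $f(x) = g(x)$, is trivially handled since both sides vanish. Your closing remark correctly identifies the duality with \cref{claim_xlnx}: there the lower endpoint $\gamma$ gives $1/c(x) \le 1/\gamma$ and the inequality runs the other way with a factor of $1/\gamma$, while here the upper endpoint $1$ does the work. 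No gaps.
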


\section{Additional Experimental Details and Results}

We first provide a detailed explanation of the evaluation metric.
To determine the effectiveness of a model $F$ in distinguishing between the selected and rejected completions ($y_c$ and $y_r$) for a given prompt $x$, we require that $F$ ranks the chosen completion higher than the rejected one.
This condition is formulated as $F(y_c)-F(y_r)>0$ for each pair $\Tilde{x}=(x;y_c,y_r)$, implying that $F(\Tilde{x})$ should exceed $0.5$.
Consequently, the test accuracy is defined as the fraction of instances where $F(\Tilde{x})>0.5$.

\subsection{Results of Pythia} \label{exp_result_pythia}

The overall trends observed in~\cref{fig:pythia} are similar with those in~\cref{fig:cai}. Our analysis of the results in~\cref{fig:pythia} further highlights those insights:
First, the accuracy exhibits a consistent upward trend from left to right, reinforcing the finding that the generalization capability of the strong model improves when a more capable weak model is utilized as the supervisor.
Second, the results demonstrate that in the majority of experimental settings (7 out of 12), reverse losses outperform forward losses, leading to stronger model performance.
Given the superior capabilities of the Pythia series compared to the GPT-2 series~\citep{biderman2023pythia}, as well as the fact that Pythia's strong ceiling model outperforms GPT-2, a key implication emerges. When the Pythia series serves as a weak model, it may generate less noise on non-target labels. As a result, the potential advantages of reverse losses are diminished, leading to only a slight improvement of reverse losses over forward losses.
Finally, across almost all of the settings (10 out of 12), the strong model trained with reverse KL and CE losses achieves superior performance compared to its weak supervisor. This observation is in full agreement with our theoretical predictions, further validating the effectiveness of reverse losses in enhancing model performance.

\begin{figure*}[t]
\begin{center}
\vspace{-10pt}
\subfigure[Results of Pythia-series on CAI-Harmless]{ 
\begin{minipage}[t]{0.95\linewidth}  \centerline{\includegraphics[width=1\linewidth]{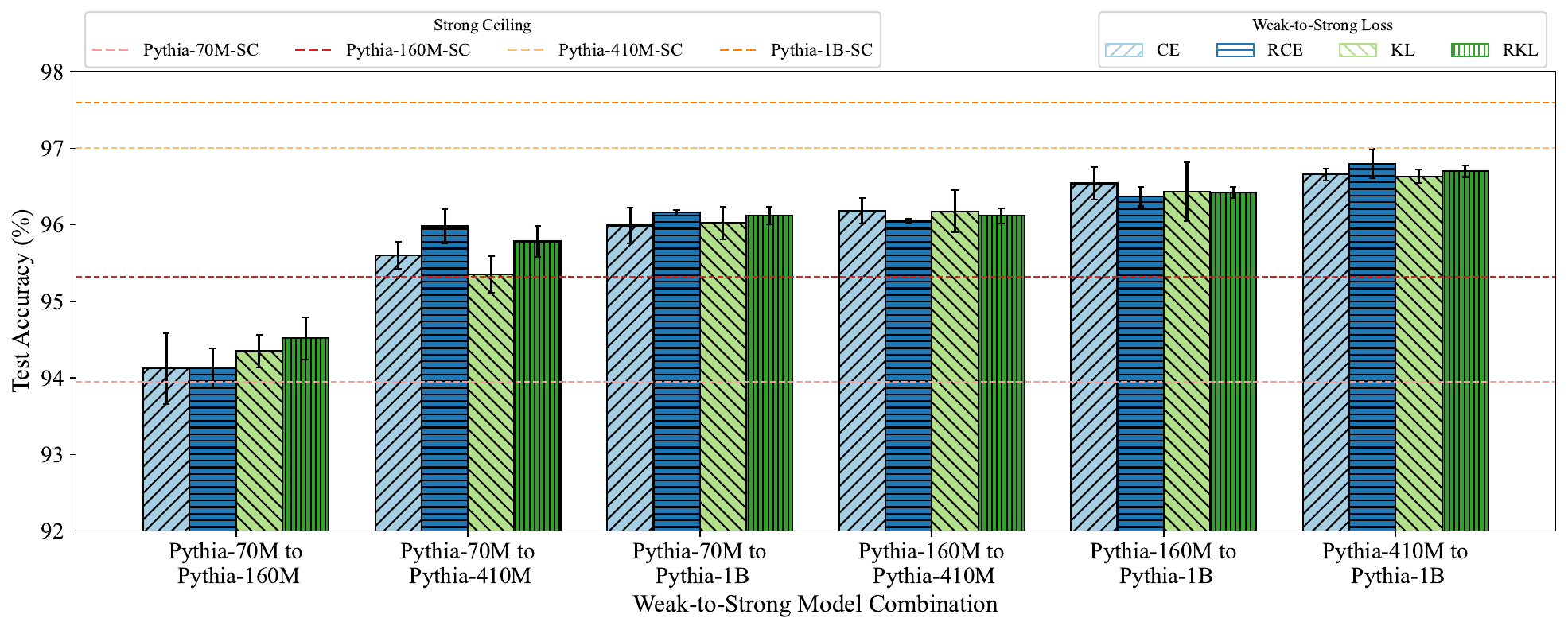}}
\end{minipage}  
}  
\subfigure[Results of Pythia-series on helpful set of HH-RLHF]{
\begin{minipage}[t]{0.95\linewidth}
\centerline{\includegraphics[width=1\linewidth]{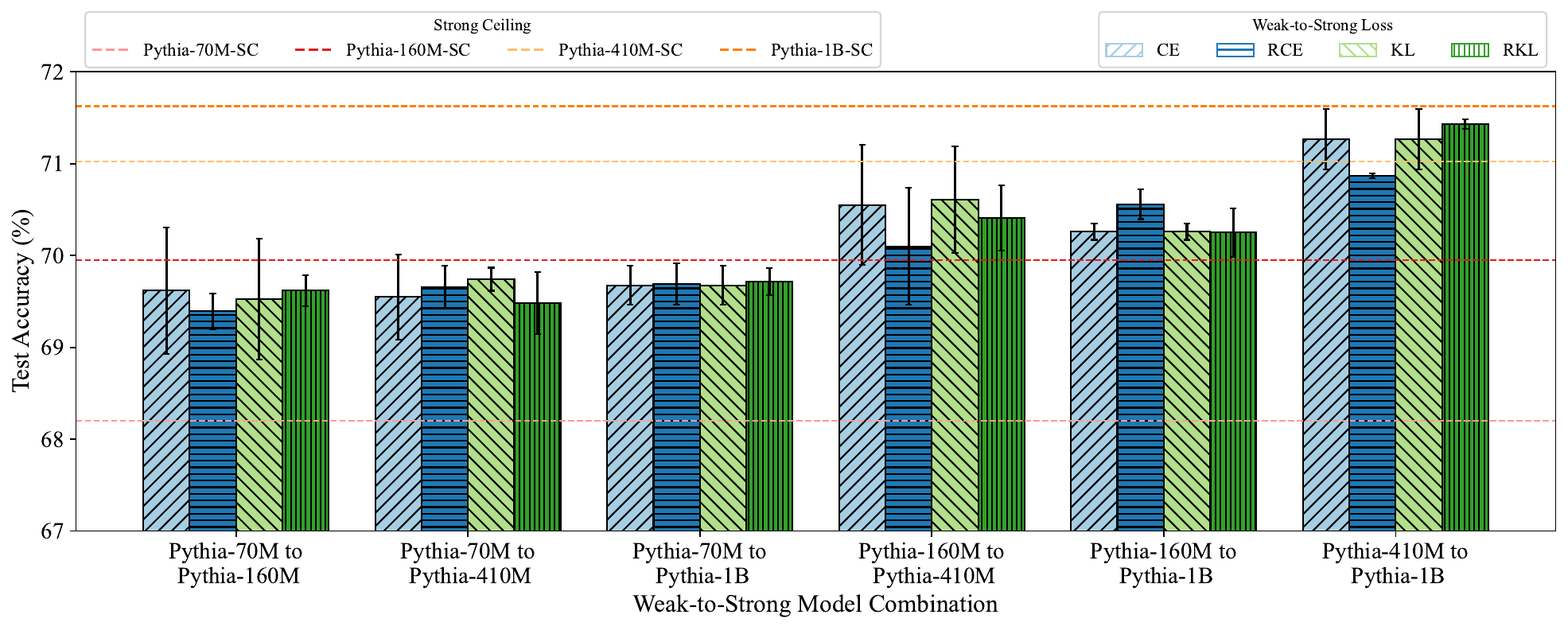}}
\end{minipage}  
}    
\vspace{-5pt}
\caption{Results of Pythia-series. ``SC'' denotes the strong ceiling model, and ``A to B'' indicates the use of weak teacher ``A'' to supervise strong student ``B''. The terms CE, RCE, KL, and RKL refer to cross-entropy loss, reverse cross-entropy loss, forward KL divergence loss, and reverse KL divergence loss, respectively. Error bars represent the standard deviation across three runs of the experiment.}
\label{fig:pythia}
\end{center}
\vspace{-10pt}
\end{figure*}

\subsection{Auxiliary Confidence Loss} \label{exp:conf_loss}

As highlighted by~\citet{burns2023weak}, we explore an alternative approach: introducing an additional regularization term designed to enhance the strong model’s confidence in its predictions using standard cross-entropy loss, which is called ``Auxiliary Confidence Loss'' in~\citet{burns2023weak}:
\begin{align} \label{eq:confidence_loss}
    L_{\mathrm{conf}}(f)=(1-\alpha) \cdot \underbrace{\cross \left(F_w, f \circ h_s\right)}_{\text{vanilla cross-entropy loss}} + \; \alpha \cdot \underbrace{\cross\left(\hat{f}_t \circ h_s, f \circ h_s \right)}_{R(f)},
\end{align}
where $\alpha$ is the weight constant, $R(f)$ is the regularization term, and $\hat{f}_t$ corresponds to hardened strong model predictions using a threshold $t$, i.e., for any $x$:
\begin{align*}
    \hat{f}_t \circ h_s(x)= \mathbb{I}(f \circ h_s(x) > t) \in \{ 0,1 \},
\end{align*}
where $\mathbb{I}(\cdot)$ is the indicator function.
Rewrite~\cref{eq:confidence_loss} as the minimization objective in W2SG:
\begin{align}
    f_{sw} = \argmin_{f \in \cF_{s}}\; L_{\mathrm{conf}}(f).
\end{align}
As advocated by~\citet{burns2023weak}, this regularization serves to mitigate overfitting to weak supervision, thereby improving the overall performance of the strong model.
Therefore, to further explore the advantage of reverse cross-entropy loss, we replace the vanilla cross-entropy with reverse cross-entropy in $L_{\mathrm{conf}}(f)$ and conduct W2SG using the following objective:
\begin{align} \label{eq:reverse_confidence_loss}
    f_{sw}^r & = \argmin_{f \in \cF_{s}}\; L^r_{\mathrm{conf}}(f) \nonumber
    \\ & = \argmin_{f \in \cF_{s}}\; (1-\alpha) \cdot \underbrace{\cross \left(f \circ h_s, F_w\right)}_{\text{reverse cross-entropy loss}}+ \; \alpha \cdot R(f).
\end{align}

We set $\alpha=0.2$ to ensure that the reverse/forward CE loss dominates the regularization, because we use a small batch size here and we want to reduce the negative impact of the randomness and instability brought by the auxiliary confidence loss within a single batch.
The experimental comparison between $f_{sw}$ and $f_{sw}^r$ is presented in~\cref{fig:conf_loss}.
First, by combining the observations from~\cref{fig:cai} and~\cref{fig:conf_loss}, we observe that the application of auxiliary confidence loss slightly enhances the performance of the strong model, consistent with the findings of~\citet{burns2023weak}.
Second, the use of reverse cross-entropy loss consistently enables the strong model to outperform its counterpart trained with standard cross-entropy loss. This finding, combined with previous experimental results in this work, highlights the superior effectiveness of reverse losses compared to forward losses.

\end{document}